\newif\ifarXiv         
\newif\ifjournal        

 \journalfalse \arXivtrue  

\ifjournal 
\documentclass[smallextended]{svjour3}       
%
%
%
%
%
\RequirePackage{fix-cm}
\smartqed  
\usepackage{graphicx}
%
%
%
%
%

\usepackage{lipsum}
\usepackage{amsfonts}
\usepackage{graphicx}
\usepackage{epstopdf}
\usepackage{algorithmic}

\usepackage[mathlines]{lineno}
\linenumbers

\usepackage{booktabs}
\usepackage{caption}
\usepackage{subcaption}
\usepackage[small,compact]{titlesec}
\usepackage{amsmath,amssymb,amsfonts,mathabx,setspace,bm} 

\usepackage{graphicx,caption,epsfig,epsfig,wrapfig}
\usepackage{url,color,verbatim,algorithmic}
\usepackage[sort,compress]{cite}
\usepackage[scaled]{helvet}

\usepackage[T1]{fontenc}

\usepackage{cite}
\usepackage{tikz}
\usetikzlibrary{positioning} 



\ifpdf
  \DeclareGraphicsExtensions{.eps,.pdf,.png,.jpg}
\else
  \DeclareGraphicsExtensions{.eps}
\fi


\title{Self-test loss functions for learning weak-form operators and gradient flows
 \thanks{Yuan Gao was partially supported by NSF under Award DMS-2204288. Fei Lu was partially supported by NSF DMS-2238486.}
}
\titlerunning{Self-test loss functions}        

\author{Yuan Gao \and Quanjun Lang \and Fei Lu}
\institute{Y. Gao\at Department of Mathematics, Purdue University, West Lafayette, IN, USA \\ \email{gao662@purdue.edu}
       \and Q. Lang\at Department of Mathematics, Duke University, Durham, NC, USA \\   \email{quanjun.lang@duke.edu} 
       \and F. Lu\at Department of Mathematics, Johns Hopkins University, Baltimore, MD, USA \\ \email{feilu@math.jhu.edu}
       }

\date{Received: date / Accepted: date}





\newcommand{\argmin}[1]{\underset{#1}{\operatorname{arg}\operatorname{min}}\;}


\def\R{\mathbb{R}}                                
\def\P{\mathbb{P}}

\def\barX{\overline X}
\def\E{ \mathbb{E}}
\def\calE{\mathcal{E}}        \def\calH{\mathcal{H}}

\newcommand{\innerp}[1]{\langle{#1}\rangle}

\def\spaceX{\mathbb{X}}
\def\spaceY{\mathbb{Y}}
\def\spaceM{\mathbb{M}}

\def\Gbar{{\overline{G}}}

\newcommand{\la}{\langle}
\newcommand{\ra}{\rangle}
\newcommand{\pt}{\partial}
\newcommand{\eps}{\varepsilon}
\newcommand{\ud}{\,\mathrm{d}}

\newcommand{\8}{\infty}

\newcommand{\bR}{\mathbb{R}}

\def\brho{\boldsymbol{\rho}}


 \newtheorem{assumption}[theorem]{Assumption}

\numberwithin{equation}{section}
\numberwithin{theorem}{section}

\fi

\ifarXiv     
\documentclass[12pt]{article}  

\usepackage[small,compact]{titlesec}
\usepackage{amsmath,amssymb,amsfonts,mathabx,setspace,bm} 
\usepackage{graphicx,caption,epsfig,epsfig,wrapfig}
\usepackage{url,color,verbatim,algorithmic}
\usepackage[sort,compress]{cite}
\usepackage[ruled,boxed]{algorithm}
\usepackage[scaled]{helvet}
\usepackage[T1]{fontenc}
\usepackage[bookmarks=true, bookmarksnumbered=true, colorlinks=true,   pdfstartview=FitV,
linkcolor=blue, citecolor=blue, urlcolor=blue]{hyperref}
\usepackage[sort,compress]{cite}
\usepackage{booktabs}
\usepackage{caption}
\usepackage{subcaption}
\usepackage{tikz}
\usetikzlibrary{positioning} 
\usepackage{xcolor}
\usepackage{authblk}

\marginparwidth 0pt
\oddsidemargin  -0.15in   
\evensidemargin  -0.15in 
\marginparsep 0pt
\topmargin   -.55in   
\textwidth   6.8in      
\textheight  9.3in      
\parskip=0.0in

\newcommand{\argmin}[1]{\underset{#1}{\operatorname{arg}\operatorname{min}}\;}


\def\R{\mathbb{R}}                                
\def\P{\mathbb{P}}

\def\barX{\overline X}
\def\E{ \mathbb{E}}
\def\calE{\mathcal{E}}        \def\calH{\mathcal{H}}

\newcommand{\innerp}[1]{\langle{#1}\rangle}

\def\spaceX{\mathbb{X}}
\def\spaceY{\mathbb{Y}}
\def\spaceM{\mathbb{M}}

\def\Gbar{{\overline{G}}}

\newcommand{\la}{\langle}
\newcommand{\ra}{\rangle}
\newcommand{\pt}{\partial}
\newcommand{\eps}{\varepsilon}
\newcommand{\ud}{\,\mathrm{d}}

\newcommand{\8}{\infty}

\newcommand{\bR}{\mathbb{R}}

\def\brho{\boldsymbol{\rho}}


\newtheorem{theorem}{Theorem}

\newtheorem{assumption}[theorem]{Assumption}

\newtheorem{corollary}[theorem]{Corollary}
\newtheorem{definition}[theorem]{Definition}
\newtheorem{example}[theorem]{Example}

\newtheorem{proposition}[theorem]{Proposition}
\newtheorem{remark}[theorem]{Remark}
\newenvironment{proof}[1][Proof]{\noindent\textbf{#1.} }{\ \rule{0.5em}{0.5em}}

\numberwithin{equation}{section}
\numberwithin{theorem}{section}

\fi

\begin{document}


\ifarXiv
\title{Self-test loss functions
for learning weak-form operators and gradient flows}
\author[1]{Yuan Gao}
\affil[1]{{\small Department of Mathematics, Purdue University, West Lafayette,  USA} }
\author[2]{Quanjun Lang}
\affil[2]{{\small Department of Mathematics, Duke University, Durham, USA}} 
\author[3]{Fei Lu}
\affil[3]{{\small Department of Mathematics, Johns Hopkins University, Baltimore, USA}}
 \date{}
\fi
\maketitle


\begin{abstract} 
The construction of loss functions presents a major challenge in data-driven modeling involving weak-form operators in PDEs and gradient flows, particularly due to the need to select test functions appropriately. We address this challenge by introducing self-test loss functions, which employ test functions that depend on the unknown parameters, specifically for cases where the operator depends linearly on the unknowns. The proposed self-test loss function conserves energy for gradient flows and coincides with the expected log-likelihood ratio for stochastic differential equations. Importantly, it is quadratic, facilitating theoretical analysis of identifiability and well-posedness of the inverse problem, while also leading to efficient parametric or nonparametric regression algorithms. It is computationally simple, requiring only low-order derivatives or even being entirely derivative-free, and numerical experiments demonstrate its robustness against noisy and discrete data.
 \end{abstract}

\noindent \textbf{Keywords}: inverse problem; weak form; variational inference; gradient flow; operator learning    \\ 
\noindent \textbf{Mathematics Subject Classification (2020)}: 35R30,  68T09
\ifarXiv
 \vspace{-1mm}
\setcounter{secnumdepth}{2}
\setcounter{tocdepth}{1}
\tableofcontents
\fi 

\section{Introduction}
Learning governing equations from data is a fundamental task in many areas of science and engineering, such as physics, biology, and geosciences \cite{champion2019data,schaeffer2017learning,song2024towards,gilpin2020learning,owhadi2019kernel,hamzi2021learning,long2018_PDENetLearning}. The governing equation allows us to model complex systems, predict future behavior, and develop effective control strategies. They are often in the form of partial differential equations (PDEs), such as gradient flows \cite{JW16,carrillo2019aggregation,gao2019gradient,gao2020analysis} and diffusion models \cite{carrillo2024sparse,ghattas2021learning,luli2024learningEV,song21score,yang2022generative,isakov2006inverse}. To learn these equations, it is necessary to use data to approximate the differential operators. However, real-world data is often \emph{noisy and discrete}, leading to large errors in derivative approximations and unreliable estimators when using strong-form equations. 

Weak-form equations provide a more versatile framework. By using smooth test functions with integration by parts, weak forms use lower-order differential operators, thereby offering improved robustness to noisy and discrete data \cite{kharazmi2019variational,ghattas2021learning,messenger2021weak,messenger2021weakSindy,zang2020weak,stephany2024weak,gao2022physics}. 

However, constructing loss functions for variational inference of weak-form equations poses a major challenge. This difficulty arises because the weak form requires test functions to be dense in the dual space, typically an infinite-dimensional function space. In classical approaches, test functions are often chosen to be smooth and compactly supported, with Galerkin basis functions being a prominent example \cite{messenger2021weakSindy}. These methods are often limited to low-dimensional problems and are not scalable to high-dimensional settings, such as the Wasserstein gradient flows of probability measures in high-dimensional spaces. Importantly, since universal test functions are agnostic to the data and the model, it is necessary to use a large set of such test functions to ensure that all relevant information from the data is captured, which often leads to redundancy and computational inefficiency.  

We address this challenge by introducing \emph{self-test loss functions} for cases where the operator depends linearly on the  (function-valued) parameter. The key idea is to employ test functions that depend on the unknown parameter itself and the data, which we term \emph{self-testing functions}. Such test functions are automatically determined by the operator and the data. Thus, they automate the construction of the loss function. 

The proposed loss function is suitable for various weak-form operators, including the high-dimensional gradient flows and diffusion models. In particular, the selt-test loss function is quadratic. It facilitates theoretical analysis of identifiability and well-posedness of the inverse problem. It also enables efficient parametric and nonparametric regression algorithms. It is computationally simple, requiring only low-order derivatives or even being entirely derivative-free. Our numerical experiments demonstrate its robustness against noisy and discrete data.

\subsection{Problem settings and main results}\label{sec:exampleList}

Consider the problem of estimating the (function-valued) parameter $\phi$ in the operator $R_\phi:\spaceX \to \spaceY$ in the weak-form equation:  
\begin{equation}\label{eq:weak-operatorEq}	
	R_\phi[u] = f \quad   \Leftrightarrow  \quad
 \innerp{R_\phi[u],v} = \innerp{f,v},  \, \, \forall v\in \spaceY^*
\end{equation}
from data consisting of noisy discrete observations of input-output pairs: 
\begin{equation}\label{eq:data_uf}
\mathcal{D} = \{(u_l,f_l)\}_{l=1}^L. 
\end{equation}
Here, $\spaceX,\spaceY$ are metric spaces, $\spaceY^*$ is the dual space of $\spaceY$, and $\innerp{\cdot, \cdot}$ means the dual pair between $\spaceY$ and $\spaceY^*$. The operator $R_\phi: \spaceX\to \spaceY$ can be either linear or nonlinear. Depending on the operators, the data can be the functions at discrete spatial-time meshes or empirical distributions of samples; see \eqref{eq:data_uxt}, \eqref{eq:data_uf_disc} and \eqref{eq:data_Xtl} below.  

We assume that the operator $R_\phi[u]$ depends linearly on $\phi$ when $u$ is fixed, that is, 
      \begin{equation}\label{eq:R-linear}
      	R_{\alpha\phi_1+\beta \phi_2}[u]=\alpha R_{\phi_1}[u]+\beta R_{\phi_2}[u], 
      \end{equation} 
      for any $\alpha,\beta\in \R$ and any function $\phi_1$ and $\phi_2$ such that the operator is well-defined. 	
We assume no prior knowledge of $\phi$, except that the operator $R_\phi[u]$ is well-defined.

To construct a loss function using the weak form equation, we introduce \emph{self-testing functions} $v_\phi[u]\in\spaceY^*$ defined from $R_\phi$ and $u$ so that, for all $\phi,\psi$,
\begin{equation}\label{eq:core-props}
\langle R_\phi[u],\, v_\psi[u]\rangle  =  \langle R_\psi[u],\, v_\phi[u]\rangle \; \text{(symmetry)},\quad
\langle R_\phi[u],\, v_\phi[u]\rangle  \ge 0 \; \text{(positivity)}.
\end{equation}
The \emph{self-test loss function} is 
\begin{equation*}
    \mathcal{E}_\mathcal{D}(\phi) = \frac{1}{L}\sum_{l=1}^L \langle R_\phi[u_l], v_\phi[u_l] \rangle - 2 \langle f_l, v_\phi[u_l] \rangle + C_0, 
    \end{equation*}
    where $C_0$ is an arbitrary constant.  

{\color{black}    We demonstrate the self-test loss function in three settings involving function-valued parameters: Wasserstein gradient flows, a weak-form elliptic operator, and interacting particle systems with sequential ensembles of unlabeled data. Among these, Example~\ref{exmpl:GF} serves as a running example throughout the paper.
}

\begin{example}[Wasserstein gradient flow] \label{exmpl:GF}
Estimate $\phi=(h,\Phi,V)$ in the Wasserstein gradient flow 
	   \begin{equation}\label{eq:W2-GF}
		 \begin{aligned}
		  \partial_t u & =  \nabla\cdot(u \nabla [\nu h'(u) + \Phi*u + V] )=:R_\phi[u],  
		 \end{aligned}
	    \end{equation}
	    where $h:\R\to \R $ is the diffusion rate function, $\Phi:\R^d\to \R $ is the pairwise interaction potential satisfying $\Phi(-x)=\Phi(x)$, and $V:\R^d\to \R $ is an external potential acting on each particle.  
  The data consists of discrete noisy observations of solutions on a mesh $\{x_i\}_{i=1}^N\subset \R^d$:  
	\begin{equation}\label{eq:data_uxt}
		\mathcal{D}_1 =\{ u_l(x_i)\}_{i,l=1}^{N,L}, \quad u_l(x_i)= u(x_i,t_l)+ \epsilon_{i,l}, 
\end{equation} 
where $\{\epsilon_{i,l}\}$ are noises or measurement errors.  
{\color{black}The self-testing function is $v_\phi[u]=\nu h'(u) + \Phi*u + V $, and the self-test loss function is 
\begin{equation}\label{eq:stLoss-mf_all}
  \calE_u(\phi)   : =  \frac{1}{T}\int_0^T \int_{\R^d} \big[ u | \nabla [  \nu  h'(u) + \Phi * u + V ] |^2 - 2 \partial_t u [ \nu h'(u) +\Phi*u + V ]  \big ] dx dt, 
\end{equation} 
see Section {\rm \ref{sec:stloss-GF}} for a derivation and Section {\rm\ref{sec:ID}} for analysis on identifiability and well-posedness of the inverse problem. One can construct an empirical loss function by approximating the integrals in \eqref{eq:stLoss-mf_all} using data $\mathcal{D}_1$. }
\end{example}

\begin{example}[Weak-form operator]\label{exmpl:weakPDE}
 Estimate the coefficient $a:\R^d\to \R$ in the PDE:   
        \begin{equation}\label{eq:Lap-au}
       R_a[u] := - \Delta ( a u) = f  	
        \end{equation}
from data consisting of discrete noisy observations on the spatial mesh $\{x_i\}_{i=1}^N\subset \R^d$:   
\begin{equation}\label{eq:data_uf_disc}
\mathcal{D}_{2} =\{ (u_l(x_i), f_l(x_i)\}_{i,l=1}^{N,L}. 
\end{equation}
The self-testing function is $v_a[u] = au$; see Section {\rm\ref{sec:stloss-weakPDEs}}. Note that this inverse problem is different from the inverse conductivity problem (see e.g., {\rm\cite{isakov2006inverse}}), where the goal is to estimate $a$ in $\nabla\cdot (a\nabla u) = 0 \text{ in }\, \Omega $ when given only $u\big|_{\partial \Omega}=f $.    

\end{example}

\begin{example}[Sequential ensembles of unlabeled data]\label{exp:IPS} 
 Estimate the potentials $\Phi,V:\R^d\to \R $ in the differential equation of $N$-interacting particles, 
\begin{equation}\label{eq:IPS_deterministic}
   \frac{d}{dt}X_t^i  = -\big[ \nabla V(X_t^i) + \frac{1}{N}\sum_{j=1}^N \nabla \Phi(X_t^i-X_t^j)  \big] 
     ,\quad   	1\leq i\leq N 
\end{equation}
from data consisting of $M$ independent sequences of ensembles of \textbf{unlabeled} particles
   \begin{equation}\label{eq:data_Xtl} 
       \mathcal{D}_3 =\{( X_{t_l}^{i_l, (m)}, 1\leq i_l\leq N) \}_{m,l=1}^{M,L}. 
	\end{equation}
	 Since the particles are unlabeled, there is no information on their trajectories. Thus, the classical methods based on the derivatives $\frac{d}{dt} X_t^i$, see e.g., {\rm\cite{LZTM19pnas,LMT21_JMLR,LMT21}}, are no longer applicable. We construct a loss function based on the weak-form equation of the empirical measures $\mu_N(x,t):=\frac{1}{N}\sum_{i=1}^{N} \delta_{X^i_t}(x) $, and the self-testing function is $v_\phi[\mu_N]= \Phi * \mu_N + V $; see Section {\rm\ref{sec:ensembleData}}. Additionally, we demonstrate numerical estimation using neural network approximation in Section {\rm\ref{sec:num-PhiV}}.
\end{example} 

{\color{black}
\paragraph{Key features of the self-test loss function.} The quadratic self-test loss function conserves energy for gradient flows, aligns with the expected log-likelihood ratio for stochastic differential equations, facilitates theoretical analysis of identifiability and well-posedness, and leads to efficient parametric or nonparametric regression algorithms. 
\begin{itemize}
\item It aims to match the energy dissipation for the Wasserstein gradient flow, and its minimizer \emph{conserves the energy} of the data flow; see Theorem \ref{thm:GF_EnergyConservation}. The self-testing functions are the first variation of the free energy. Also, for the weak-form Fokker-Planck equation of the McKean-Vlasov stochastic differential equation (SDE), the self-test loss function coincides with the expectation of the negative log-likelihood ratio (see Theorem \ref{thm:stloss-likelihood}). As a result, a minimizer of the self-test loss function maximizes the expected likelihood. 
\item Importantly, the loss function is quadratic since both $R_\phi[u]$ and $v_\phi[u]$ are linear in $\phi$. It facilitates analysis on \emph{identifiability} and \emph{well-posedness} of the inverse problem based on the Hessian of the loss function. We demonstrate such an analysis for learning the diffusion rate function and the potentials in the Wasserstein gradient flow in Section \ref{sec:ID}. 
\item  It also leads to computationally efficient parametric or nonparametric regression algorithms, using either least-squares or neural network regression. We demonstrate its robustness against noisy and discrete data in parametric and nonparametric estimations in Section \ref{sec:numerics}.
\end{itemize}
}

\subsection{Related work} 
Weak formulations offer a robust and flexible foundation for addressing both forward and inverse PDE problems, and have thus attracted growing attention in recent years.  

{\color{black}
\noindent\textbf{General forward and inverse problems using weak-form.}  For forward problems, machine learning methods rooted in variational principles include the Deep Ritz method \cite{e2018_DeepRitz}, the Deep Galerkin method \cite{sirignano2018dgm}, variational physics-informed neural networks \cite{de2024wpinns,kharazmi2019variational,kharazmi2021hp}, and physics-informed graph neural Galerkin networks \cite{gao2022physics}, among others. For inverse PDE problems, we refer to \cite{isakov2006inverse} and \cite{ghattas2021learning} for comprehensive overviews. In classical inverse settings, where data are often limited to boundary measurements (e.g., in the inverse conductivity problem) or spectral information (e.g., in inverse spectral problems), one must estimate both the solution and the unknown parameters simultaneously. Recent approaches, such as weak adversarial networks \cite{bao2020numerical} and physics-informed graph neural Galerkin networks \cite{gao2022physics}, use weak form equations to address these classical difficulties. 

In contrast, our setting considers data consisting of PDE solutions sampled on discrete spatial grids or approximated by empirical measures, and the task is to estimate the PDE parameters. On the other hand, our self-test loss function can be applied to these methods, providing a systematic way to construct loss functions based on weak-form equations.  
}

\noindent\textbf{Regression based on weak-form.} Weak-form methods for parameter estimation have been widely explored in sparse regression frameworks, including Weak-SINDy \cite{messenger2021weak,messenger2021weakSindy,messenger2024weak}, Weak-PDE-LEARN \cite{stephany2024weak}, and other weak-form-based data-driven modeling approaches \cite{schaeffer2017learning,ghattas2021learning,song2024towards,carrillo2024sparse,ren2022data}. These methods rely on carefully designed families of smooth, compactly supported test functions that must be tailored to the data, domain, and PDE structure, a task that becomes increasingly challenging and computationally demanding in high dimensions. \textcolor{black}{In contrast, the self-test loss function proposed in this work removes the need for such hand-crafted test sets by constructing test functions directly from the operator and the data in a canonical way. Moreover, it can be seamlessly integrated into the Weak SINDy framework: one may include the self-test function as an additional test function or augment the Weak SINDy loss with the self-test term, thereby combining SINDy's sparsity-promoting structure with the robustness and adaptivity of the self-test formulation.}

\noindent\textbf{Energy variational approaches and gradient flow inference.} Our framework is closely related to energy variational approaches \cite{luli2024learningEV,wang2021particle,huliu24evNN} and gradient flow inference \cite{LangLu22,carrillo2024sparse}. The energy-dissipation-based loss \cite{luli2024learningEV,wang2021particle} shares conceptual similarities with the self-test loss function, aiming to preserve energy structures observed in the data. In particular, both approaches accommodate PDEs or stochastic differential equations for generalized diffusions and gradient flows, and can handle data defined on spatial grids or represented by particle samples. Furthermore, in the context of gradient flow inference, the self-test loss aligns with likelihood-based loss functions \cite{LangLu22} and the quadratic loss \cite{carrillo2024sparse}. By casting these methods into a unified variational inference framework, the self-test loss function extends their applicability beyond energy-dissipating systems to general weak formulations.

\bigskip
The rest of the paper is organized as follows. Section \ref{sec_self_test_loss} defines the framework of the self-test loss functions and provides examples. In Section \ref{sec:GF-energy}, we show that for general gradient flow, the self-test loss function's minimizers conserve the energy. We also connect it with the likelihood of SDEs. In Section \ref{sec:ID}, we study the identifiability and well-posedness of the diffusion rate function and the potentials in aggregation-diffusion equations. We present numerical experiments in Section \ref{sec:numerics} and conclude in Section \ref{sec:conclusion}.

\noindent\textbf{Notation.} Throughout the paper, we denote the true parameter by $ \phi_* $ and observational data by $ f $. We abuse the notation $u$, which may represent either a function $u(x)$ or $u(x,t)$ for a given $t$, as long as the context is clear.  Table \ref{tab:notation} lists the notations. 

 \begin{table}[htbp]\caption{Notations}
\centering
\begin{tabular}{c | l}
\toprule
 \textbf{Notations} & \textbf{Description}   \\
\midrule
$R_\phi[\cdot], \quad v_\phi[\cdot]$  &  Operators $R_\phi[\cdot]: \spaceX\to \spaceY, \quad v_\phi[\cdot]:\spaceX\to \spaceY^*$ \\
$\phi$ & (Function-valued) parameter to be estimated from data\\
$\innerp{f,v}, \, \quad \innerp{\cdot,\cdot}_H$  &  Dual operation with $f\in \spaceY, v\in \spaceY^*$; inner product in $H$ \\
$\mathcal{E}(\cdot)$ $E_\phi(\cdot)$  &  $\R$-valued loss function and energy function   \\
$\mathcal{P}_2(\R^d)$ & The space of probability measures with finite second moments\\  
\bottomrule
\end{tabular}
\label{tab:notation}
\end{table}


\section{Self-test loss functions} \label{sec_self_test_loss}

We first formulate the self-test loss function within a general \emph{weak-form operator learning} setting that includes both weak-form PDEs and gradient flows. The formulation is then illustrated through the running examples. For clarity, we derive the loss functions under the assumption of continuous noiseless data, before discussing how they are approximated from discrete, noisy measurements in practice. 

\subsection{Weak-form operator learning}\label{sec:stLoss_def}
The main idea behind the self-test loss function is to guide the minimization in the direction that explores the unknown parameter the most. Thus, we use the parameter to construct test functions.  

\begin{definition}[Self-test loss function]\label{def:self-test-loss} 
Consider the problems of estimating $\phi$ in the operator equation \eqref{eq:weak-operatorEq} from the dataset in \eqref{eq:data_uf}, where the operator $R_\phi[u]$ is linear in $\phi$. We call $v_\phi[u]\in \spaceY^*$ a \textbf{self-testing function} if it satisfies the self-testing properties: 
\begin{equation}	
	\begin{aligned}\label{def:self-testFn}
	\text{Symmetry:} &&  \quad \innerp{R_{\phi}[u],v_{\psi}[u]} & = \innerp{R_{\psi}[u],v_{\phi}[u]},  \\
	\text{Positivity:} &&  \quad \innerp{R_{\phi}[u],v_{\phi}[u]} & \geq 0, \\
	\text{Linearity:} &&  \quad  v_{\phi+\psi}[u] & = v_{\phi}[u] + v_{\psi}[u],  
    \end{aligned}
\end{equation}
for any $\phi,\psi$ such that these operations are well-defined for all $u\in \{u_l\}_{l=1}^L$. We call  
	\begin{equation}\label{eq:st-Loss0}
	\calE_{\mathcal{D}}(\phi) = \sum_{l=1}^L \innerp{R_{\phi}[u_l],v_{\phi}[u_l]} - 2\innerp{f_l, v_\phi[u_l]} + C_0
\end{equation}
a \textbf{self-test loss function}, where $C_0$ is an arbitrary constant. 
\end{definition}

The self-test loss function has three appealing properties. First, it is \emph{quadratic} in the unknown parameter $\phi$. Thus, it is convex, and its minimizers can be computed using the broad class of regression techniques. Also, the uniqueness of the minimizer can be established in a proper function space, as well as the well-posedness of the inverse problem; see Section \ref{sec:ID}. Second, it employs the weak form operator, which requires either a low-order derivative or no derivatives of $u$, thereby avoiding numerical errors when approximating derivatives from noisy discrete data. Lastly, in applications with probability gradient flow, it is particularly suitable for high-dimensional systems with ensemble data consisting of particle samples, as the loss function can be written as a combination of expectations; see Sections \ref{sec:ensembleData} and \ref{sec:likelihood}.

Two major tasks in the construction of the self-test loss function are (i) to find the self-testing function $v_{\phi}[u]$, and (ii) to select a proper parameter space for the minimization. Fortunately, the linearity of $R_\phi[u]$ in $\phi$ and the self-testing properties \eqref{def:self-testFn} provide clear clues on constructing $v_\phi[u]$. As examples, we explore such self-testing functions for weak-form PDEs and gradient flows in Sections \ref{sec:stloss-GF}--\ref{sec:stloss-weakPDEs}. Meanwhile, the loss function indicates adaptive function spaces for the parameter, which we explore in Section \ref{sec:ID}.

 The next proposition shows that any minimizer of the self-test loss function satisfies the weak-form equation when tested against all admissible self-testing functions. 
\begin{proposition}[Minimizer of the self-test loss function.]\label{prop:minimizer_self_test_loss}
Let $u_l$ be a weak solution to $R_{\phi_*}[u_l]=f_l$ for each $1\leq l\leq L$. 
The self-test loss function in \eqref{eq:st-Loss0} with $C_0= \sum_{l=1}^L\innerp{R_{\phi_*}[u_l],v_{\phi_*}[u_l]}$ can be written as 
	\begin{equation}\label{eq:st-Loss-square}
	\calE_{\mathcal{D}}(\phi) = \sum_{l=1}^L \innerp{R_{\phi-\phi_*}[u_l],v_{\phi-\phi_*}[u_l]} 
\end{equation}
and it has $\phi_*$ as a minimizer. In particular, $\phi_*$ is the unique minimizer in a linear space $\calH$ if and only if there exists $l\in \{1,\ldots, L\}$ such that $\innerp{R_{\phi}[u_l],v_{\phi}[u_l]}>0$ for every nonzero $\phi\in \calH$. Also, any minimizer $\phi_0$ of the self-test loss function is a solution to the equation
	\begin{equation}\label{eq:minimizerEq}
		\sum_{l=1}^L \innerp{R_{\phi_0}[u_l] - f_l,v_{\psi}[u_l]}=0, 
	\end{equation}
	for all $\psi$ such that $\sum_{l=1}^L \innerp{R_{\psi}[u_l] ,v_{\psi}[u_l]}<\infty$. 
\end{proposition}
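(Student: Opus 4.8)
The plan is to establish the four assertions of the proposition one at a time; the single working tool is that $\phi\mapsto R_\phi[u]$ and $\phi\mapsto v_\phi[u]$ are linear, so that $(\phi,\psi)\mapsto\innerp{R_\phi[u],v_\psi[u]}$ is, thanks to the symmetry and positivity lines of \eqref{def:self-testFn}, a nonnegative symmetric bilinear form for each admissible $u$.

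\textbf{Step 1: the square-form identity \eqref{eq:st-Loss-square}.} Because the data is noiseless and generated by $\phi_*$, we have $f_l=R_{\phi_*}[u_l]$, so with the prescribed constant $C_0=\sum_l\innerp{R_{\phi_*}[u_l],v_{\phi_*}[u_l]}$ the loss becomes $\calE_{\mathcal D}(\phi)=\sum_l\big(\innerp{R_\phi[u_l],v_\phi[u_l]}-2\innerp{R_{\phi_*}[u_l],v_\phi[u_l]}+\innerp{R_{\phi_*}[u_l],v_{\phi_*}[u_l]}\big)$. I would then expand $\innerp{R_{\phi-\phi_*}[u_l],v_{\phi-\phi_*}[u_l]}$ using linearity in each slot (from \eqref{eq:R-linear} and the linearity line of \eqref{def:self-testFn}; note these force $R_0[u]=0$ and $v_0[u]=0$, hence $R_{-\phi_*}[u]=-R_{\phi_*}[u]$ and likewise for $v$), obtaining four dual pairings, and collapse the two cross terms to $-2\innerp{R_{\phi_*}[u_l],v_\phi[u_l]}$ via the symmetry identity $\innerp{R_\phi[u],v_{\phi_*}[u]}=\innerp{R_{\phi_*}[u],v_\phi[u]}$. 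Summing over $l$ reproduces \eqref{eq:st-Loss-square}. The only thing to be watchful about is that these manipulations are valid only where the operators are defined, which is exactly the standing admissibility hypothesis.

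\textbf{Step 2: $\phi_*$ is a minimizer, and the uniqueness criterion.} By positivity, each summand in \eqref{eq:st-Loss-square} is $\ge 0$, so $\calE_{\mathcal D}(\phi)\ge 0$ on its domain while $\calE_{\mathcal D}(\phi_*)=\sum_l\innerp{R_0[u_l],v_0[u_l]}=0$; hence $\phi_*$ minimizes. For the uniqueness claim over a linear subspace $\calH$ (implicitly $\phi_*\in\calH$), set $Q_l(\eta):=\innerp{R_\eta[u_l],v_\eta[u_l]}\ge 0$, a quadratic form by the bilinearity/symmetry above; then $\calE_{\mathcal D}(\phi_*+\eta)=\sum_l Q_l(\eta)$ for $\eta\in\calH$, which vanishes iff $Q_l(\eta)=0$ for every $l$. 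Thus $\phi_*$ is the unique minimizer in $\calH$ iff the only $\eta\in\calH$ annihilating all the $Q_l$ is $\eta=0$, i.e. iff each nonzero $\eta\in\calH$ has $Q_l(\eta)>0$ for some $l$, which is the condition stated in the proposition. A point I would double-check here is the exact quantifier order: the literal reading ``one fixed index $l$ works for all $\phi$'' is a convenient sufficient condition, while the common-kernel triviality ``for each $\phi$ some $l$ works'' is what is genuinely equivalent to uniqueness when $L>1$.

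\textbf{Step 3: the Euler--Lagrange equation \eqref{eq:minimizerEq}.} Take a minimizer $\phi_0$ and a perturbation $\psi$ with $\sum_l\innerp{R_\psi[u_l],v_\psi[u_l]}<\infty$, so that $g(t):=\calE_{\mathcal D}(\phi_0+t\psi)$ is a finite real quadratic polynomial in $t$ (with $\phi_0+t\psi$ staying admissible). Expanding by bilinearity and using symmetry to merge $\innerp{R_{\phi_0}[u_l],v_\psi[u_l]}+\innerp{R_\psi[u_l],v_{\phi_0}[u_l]}=2\innerp{R_{\phi_0}[u_l],v_\psi[u_l]}$, the coefficient of $t$ in $g$ equals $2\sum_l\innerp{R_{\phi_0}[u_l]-f_l,v_\psi[u_l]}$; since $g$ attains its minimum at $t=0$, this coefficient vanishes, which is precisely \eqref{eq:minimizerEq}. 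The main obstacle is not any individual step --- each is bookkeeping with a symmetric bilinear form --- but the careful handling of admissibility and finiteness throughout: which pairings are finite, that $\phi-\phi_*$ and $\phi_0+t\psi$ lie in the region where the self-testing identities hold, and the quantifier point flagged in Step 2.
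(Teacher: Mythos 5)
Your proposal is correct and follows essentially the same route as the paper's own proof: expand the quadratic form using bilinearity and the symmetry identity $\innerp{R_{\phi_*}[u_l],v_{\phi}[u_l]}=\innerp{R_{\phi}[u_l],v_{\phi_*}[u_l]}=\innerp{f_l,v_\phi[u_l]}$ to obtain \eqref{eq:st-Loss-square}, deduce minimality from positivity, and read off \eqref{eq:minimizerEq} from the vanishing linear coefficient of $t\mapsto\calE_{\mathcal D}(\phi_0+t\psi)$. Your remark on the quantifier order in the uniqueness criterion is well taken — the paper's proof likewise works with the condition that $\sum_{l}\innerp{R_\phi[u_l],v_\phi[u_l]}>0$ for each nonzero $\phi\in\calH$, which is the genuinely equivalent formulation.
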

\begin{proof} 
Given the above $C_0$, Eq.\eqref{eq:st-Loss-square} follows from  
	\begin{align*}
	\calE_{\mathcal{D}}(\phi) & = \sum_{l=1}^L \big[ \innerp{R_{\phi}[u_l],v_{\phi}[u_l]} - 2\innerp{f_l, v_\phi[u_l]} + \innerp{R_{\phi_*}[u_l],v_{\phi_*}[u_l]} \big] \\ 
	& =\sum_{l=1}^L \innerp{R_{\phi-\phi_*}[u_l],v_{\phi-\phi_*}[u_l]}, 
\end{align*}
where the last equality follows from the facts that $R_\phi[u]$ and $v_\phi[u]$ are linear in $\phi$, and that $ \innerp{R_{\phi}[u_l],v_{\phi^*}[u_l]}=\innerp{R_{\phi_*}[u_l],v_{\phi}[u_l]}=\innerp{f_l, v_\phi[u_l]}$. Then, $\phi_*$ is a minimizer by the positivity property. Also, this equation implies that the uniqueness of the minimizer in the linear space $\calH$ is equivalent to the strict positivity of $\frac{1}{L}\sum_{l=1}^L\innerp{R_{\phi}[u_l],v_{\phi}[u_l]}$ for every nonzero $\phi\in \calH$. Thus, $\phi_*$ is the unique minimizer in $\calH$ iff there exists $l\in \{1,\ldots, L\}$ such that $\innerp{R_{\phi}[u_l],v_{\phi}[u_l]}>0$ for every nonzero $\phi\in \calH$. 

Lastly, since $\phi_0$ is a minimizer of the loss function, we have, for any $\psi$ s.t.~$\calE_{\mathcal{D}}(\phi_0+\epsilon \psi)<\infty$,  
\[
0 = \frac{d}{d\epsilon } \calE_{\mathcal{D}}(\phi_0+\epsilon \psi) = \lim_{\epsilon\to 0} \frac{\calE_{\mathcal{D}}(\phi_0+\epsilon \psi)- \calE_{\mathcal{D}}(\phi_0)}{\epsilon} =  \sum_{l=1}^L\innerp{R_{\phi_0}[u_l] - f_l,v_{\psi}[u_l]}, 
\]
and it gives Eq.\eqref{eq:minimizerEq}. 
\end{proof}

\subsection{Example: Wasserstein gradient flow}\label{sec:stloss-GF}
We consider first the estimation of function-valued parameters in the Wasserstein gradient flow in \eqref{eq:W2-GF} from data in Example \ref{exmpl:GF}. 
Here the diffusion constant can be either $\nu>0$ or $\nu=0$, and the diffusion rate function $h:\R\to \R$ satisfies that $r\mapsto r^dh(r^{-d})$ is convex non-increasing. Examples of such $h$ include 
\begin{equation}\label{eq:h-powerFn}
h(s)=s \frac{1}{m-1}s^{m-1} =  
  \begin{cases}
      \frac{1}{m-1} s^{m} ,& m>1, \\
    s\log s,  &  m=1,
  \end{cases}
\end{equation}
where we use the convention $\frac{1}{m-1}\rho^{m-1} = \log \rho$ when $m=1$. In particular, when $m=1$, we have $h'(u)=1+\log u$ and $\nabla\cdot(u\nabla h'(u)) = \nabla\cdot [u \nabla  (1+ \log u) ] =  \Delta u$, and \eqref{eq:W2-GF} becomes 
\begin{equation}\label{eq:FPE-V-Phi}
\begin{aligned}
  \partial_t u & = \nu  \Delta u + \nabla \cdot(u\nabla [V+\Phi*u]),   \quad x\in \R^d, t>0. 
\end{aligned}
\end{equation}
This is the mean-field equation for the large $N$ limit of the interacting particle system, 
\begin{equation}\label{eq:IPS}
dX_t^i  = -\big[ \nabla V(X_t^i) + \frac{1}{N}\sum_{j=1}^N \nabla \Phi(X_t^i-X_t^j)  \big] dt + \sqrt{2\nu} dW_t^i,  \quad 	1\leq i\leq N,
\end{equation}
where $(W_t^i)_{1\leq i\leq N}$ are $\R^d$-valued independent Brownian motions, and $(X_0^i)_{1\leq i\leq N}$ are independent samples of distribution $u(\cdot,0)$; see e.g., \cite{JW16,JW17}.

\noindent\textbf{Self-test loss function for estimating $(h,\Phi, V)$.} The task is to estimate the parameter $\phi= (h,\Phi, V)$ in the operator $R_\phi[u]$ in \eqref{eq:W2-GF}. 
Its self-testing function is   
\begin{equation}\label{eq:stfn_MF}
	v_\phi[u]: = \nu h'(u) +\Phi*u + V. 
\end{equation}
It is direct to verify the self-testing properties in \eqref{def:self-testFn}: clearly, the symmetry and linearity hold; the positivity holds since by integration by parts, 
$\innerp{R_\phi[u],v_\phi[u]} = \int_{\R^d} u | \nabla [  \nu  h'(u) + \Phi * u + V)] |^2 dx\geq 0$,   
for all $\phi$ such that $\innerp{R_\phi[u],v_\phi[u]}$ is well-defined. 

Hence, the self-test loss function for data $( u(t,x): t\in [0,T], x\in \R^d)$ is \eqref{eq:stLoss-mf_all}. 
Its minimizer matches the energy dissipation of the gradient flow, which we explore in Section \ref{sec:GF-energy}.

\noindent\textbf{Self-test loss function for estimating $(\Phi, V)$.} Assume that  $\Phi(-x) = \Phi(x)$. Consider the problem of estimating $(\Phi, V)$ in the mean-field equation \eqref{eq:FPE-V-Phi}, i.e., estimating the parameter $\phi= (\Phi, V)$ in the (weak-form) operator $R_\phi[u]= - \nabla\cdot [ u  \nabla ( \Phi * u + V)]$. The self-testing function is $v_\phi[u]= \Phi * u + V $, and $\innerp{R_\phi[u],v_{\phi}[u]} = \int_{\R^d}  u | \nabla \Phi * u + \nabla V |^2 dx $. Thus, the self-test loss function is 
\begin{align}\label{eq:stLoss-mf}
  \calE_u(\Phi,V)   & =  \frac{1}{T}\int_0^T \int_{\R^d} \left[ u | \nabla \Phi * u + \nabla V |^2 - 2 (\partial_t u-   \nu  \Delta u) (\Phi*u + V )  \right ] dx\ dt. \notag\\
  & = \frac{1}{T}\int_0^T \int_{\R^d} \left[ u | \nabla \Phi * u + \nabla V |^2 +  2 \nu u(\Delta \Phi*u + \Delta V) \right ] dx\ dt \notag \\
  & - \frac 2 T \int_{\R^d} \big[ u(T,x) [ \Phi*u(T,x)/2 + V(x) ] -  u(0,x) [ \Phi*u(0,x)/2 + V(x) ]\big] dx, 
\end{align} 
where the last equality follows from integration by parts and $\Phi(-x)= \Phi(x)$. 

In practice, when the data is discrete, as in \eqref{eq:data_uxt}, we approximate the integrals in \eqref{eq:stLoss-mf} using numerical methods, such as Riemann sums.

\subsection{Example: elliptic diffusion operators}\label{sec:stloss-weakPDEs}
 To estimate $a:\R^d\to \R$ in Example \ref{exmpl:weakPDE}, we have $R_a[u] = - \Delta ( a u): C_c^1(\R^d)\to \spaceY$.  Here $\spaceY$ is a Banach space such that $\text{BV}^* \subset \spaceY$ and $\spaceY^* \subset \text{BV}$, where $\text{BV}$ denotes the space of functions with bounded variation. The self-testing function is $v_a[u] = au\in C_c^1(\R^d)$, whose self-testing properties follow directly, in particular, $\innerp{R_a[u],v_a[u]} =- \int_{\R^d}  \Delta ( a u) au dx =  \int_{\R^d} |\nabla (au)|^2 dx\geq 0$ for all $a\in C_c^1(\R^d)$. 
 Hence, the self-test loss function for a single data pair $(u,f)$ is      
        \begin{align}\label{eq:loss_elliptic}
        	\calE_{(u,f)}(a) = \innerp{R_a[u]-2f,v_a[u]}  =  \int_{\R^d}  [ |\nabla (au)|^2 - 2 f au ] dx.  
        \end{align}
       Approximating the integrals by Riemann sums with the data in \eqref{eq:data_uf_disc}, we obtain an empirical self-test loss function 
         \begin{align*}
        	\calE_{\mathcal{D}_{2}} (a) &= \frac{1}{L}\sum_{l=1}^{L} \calE_{(u_l,f_l)}(a)     	  = \frac{1}{NL}\sum_{i,l=1}^{N,L} \left[ |[\nabla (au_l)](x_i))|^2 - 2 f_l(x_i) a(x_i)u_l(x_i)  \right] |\Delta x_i|.  
        \end{align*}

\subsection{Example: sequential ensembles of unlabeled data}\label{sec:ensembleData} 
To estimate the potentials from sequential ensembles of unlabeled data $( X_{t_l}^{i_l, (m)}, 1\leq i_l\leq N)$ in Example \ref{exp:IPS}, we consider the empirical measures of the data 
\[
\mu_N^{(m)}(x,t_l)= \frac{1}{N}\sum_{i_l=1}^N \delta_{X_{t_l}^{i_l, (m)}}(x).
\]

We construct a self-test loss function using the fact that the empirical measure $\mu_N(x,t):=\frac{1}{N}\sum_{i=1}^{N} \delta_{X^i_t}(x) $ with $(X_t^i,1\leq i\leq N)$ satisfying \eqref{eq:IPS_deterministic} is a weak solution to equation     
    \begin{equation} \label{eq:muN_GF}
      \partial_t \mu_N = \nabla \cdot(\mu_N \nabla [V+\Phi*\mu_N]),   \quad \mu_N(\cdot,t)\in \mathcal{P}_2(\R^d), \ t>0. 
\end{equation}
In other words, for any function $v\in C^2(\R^d)$, 
    \begin{align*}
        \innerp{\partial_t \mu_N, v} =\innerp{\nabla\cdot \big(\mu_N( \nabla \Phi * \mu_N + V) \big),  v} =  - \,\innerp{\mu_N( \nabla \Phi * \mu_N + V), \nabla v}, 
    \end{align*}
    where the second equality follows from integration by parts. In fact, the above equation holds by the chain rule with the differential equation \eqref{eq:IPS_deterministic}:    \begin{align*}
        \innerp{\partial_t \mu_N, v}& = \frac{1}{N}\sum_{i = 1}^N  \frac{d}{dt} v(X_t^i) = \frac{1}{N}\sum_{i = 1}^N  \frac{dX_t^i}{dt} \cdot  \nabla v(X_t^i) \\
                                     & = -\frac{1}{N}\sum_{i = 1}^N\left(  \frac{1}{N}\sum_{j = 1}^N \nabla \Phi(X_t^i - X_t^j) + \nabla V(X_t^i)\right)\cdot \nabla v(X_t^i) \\
                                    & =  -\innerp{\mu_N( \nabla \Phi * \mu_N + V), \nabla v}
    \end{align*}
and by noticing that $\nabla \Phi * \mu_N(x) = \frac{1}{N}\sum_{j = 1}^N \nabla \Phi(x - X_t^j)$. 

Thus, we consider the weak-form operator $R_\phi[u]= - \nabla\cdot [ u  \nabla ( \Phi * u + V)]$ with output $f = \partial_t u$. The self-testing function is $v_\phi[u]= \Phi * u + V $, and $\innerp{R_\phi[u],v_{\phi}[u]} = \int_{\R^d}  u | \nabla \Phi * u + \nabla V |^2 dx $. Then, the self-test loss function is \eqref{eq:stLoss-mf} with $\nu=0$. Using the data-induced empirical measures $\{ \mu_N^{(m)}(\cdot,t_l)\}$, we have a self-test loss function 
\begin{align}\label{eq:loss_MC_deterministic}
  \calE_{\mathcal{D}_3}(\Phi, V) &=  \frac{1}{LMN} \sum_{l,i,m=1}^{L,N,M} 
        \big|  \frac{1}{N}\sum_{j = 1}^N    \nabla \Phi(X_{t_l}^{i, (m)} - X_{t_l}^{j, (m)}) + \nabla V(X_{t_l}^{i, (m)}) \big|^2 dt \notag\\
  &-   \frac{2}{LMN} \sum_{i,m= 1}^{N,M}
       \left. \big[\frac{1}{N}\sum_{j = 1}^N\Phi(X_{t}^{i,(m)} - X_t^{j, (m)}) + V(X_t^{i, (m)}) \big]\right|^{t_L}_{t_1}. 
\end{align} 
Note that this empirical loss function does not use the trajectory information of any single particle, and it uses exactly the ensemble data of unlabeled particles. We demonstrate the application of this loss function in Section \ref{sec:num-PhiV}. 

\begin{remark}
Eq.\eqref{eq:muN_GF} is the same as \eqref{eq:W2-GF} with $\nu=0$ and the empirical measures $(\mu_N(\cdot,t), t\geq 0)$ form a Wasserstein gradient flow on $\mathcal{P}_2(\R^d)$. However, it is not the Liouville equation of the ODE in \eqref{eq:IPS_deterministic}, since the Liouville equation governs the evolution of the joint distribution on $\R^{Nd}$. Similarly, the mean-field equation in \eqref{eq:FPE-V-Phi} is not the Fokker-Planck equation of the SDE in \eqref{eq:IPS}, but we can use it to derive the same self-test loss function for the SDE with sequential ensembles of unlabeled data $\mathcal{D}_3$.        	
\end{remark}

\section{Connection with energy conservation and likelihood}
\label{sec:GF-energy}

This section connects the self-test loss to two fundamental principles: the energy conservation law of gradient flows and the maximal likelihood principle for inference in stochastic differential equations (SDEs). We show that the self-test loss is designed to match the energy dissipation of a gradient flow, and that its minimizer satisfies the corresponding \emph{energy conservation law} for the observed data. Moreover, the first variation of the free energy naturally yields a self-testing function. These results are illustrated through the Wasserstein and parabolic gradient flow examples. Finally, we show that, for SDEs, the self-test loss coincides with the expected negative log-likelihood ratio.

\subsection{Matching energy dissipation for gradient flow}
\label{sec:generalGF}

We first define the self-test loss function for a generic gradient flow whose free energy depends linearly on the parameter. 

Consider the estimation of the function-valued parameter $\phi$ in the free energy $E_\phi:\spaceM\to \R$, where $\spaceM$ is a metric space, from a gradient flow path $u_{[0,T]}: = (u(t,\cdot), t\in [0,T]) \subset \spaceM$. Here, the gradient flow satisfies the equation 
\begin{equation}\label{eq:gf-abstract}
    \pt_t u = - A_u  \frac{\delta E_\phi}{\delta u}, 
\end{equation}
where $\pt_t u \in T_u \spaceM$,  $A_u: T_u^* \spaceM \to T_u \spaceM$ is a nonnegative definite operator from the cotangent plane $T_u^* \spaceM$ to the tangent plane $T_u \spaceM$, and $\frac{\delta E_\phi}{\delta u}\in T_u^* \spaceM$  is the Fr\'echet derivative (also called the first variation) of the free energy. Its weak form reads 
\begin{equation*}
     \la \pt_t u, g \ra  + \la A_u \frac{\delta E_{\phi}}{\delta u}, g \ra=0, \quad \forall g\in T_u^* \spaceM,
\end{equation*}
where $\la \cdot, \cdot \ra$ is the dual pair on $T_u\spaceM\times T^*_u \spaceM.$

We define a self-test loss function for estimating $\phi$ by connecting the gradient flow with the weak form operator $R_\phi$ in \eqref{eq:weak-operatorEq} and its self-testing function $v_\phi[u]$ as follows: 
\begin{equation}\label{test1}
 R_\phi[u]=   A_u  \frac{\delta E_{\phi}}{\delta u}, \quad v_{\phi}[u] = \frac{\delta E_\phi}{\delta u}.   
\end{equation}
The following assumptions on the gradient flow ensure the self-testing properties in \eqref{def:self-testFn}.   
\begin{assumption}\label{assum:GF-abstract}
Assume the gradient flow in \eqref{eq:gf-abstract} satisfies the following properties. 
\begin{itemize}
\item[(i)] The operator $A_u$ is linear, nonnegative definite, and symmetric: $\forall \xi,\eta \in T_u^* \spaceM$,  
\begin{equation}\label{eq:Au-properties}
\begin{aligned}
     & \text{linear:}        & A_u (\xi+\eta) & = A_u \xi + A_u \eta ; \\
    & \text{symmetric:}   & \la A_u \xi, \eta \ra   & =  \la \xi, A_u \eta \ra; \\
   & \text{nonnegative definite:}    & \la A_u \xi, \xi \ra &\geq 0.  
\end{aligned}
\end{equation}
Here $\la \cdot, \cdot \ra$ are dual pair on $T_u \spaceM\times T^*_u \spaceM.$
\item[(ii)] The free energy $E_\phi$ depends on $\phi$ linearly. Consequently, $\frac{\delta E_{\phi}}{\delta u}$ is also linear in $\phi$, i.e.,
$    \frac{\delta E_{\phi+ \psi}}{\delta u} = \frac{\delta E_{\phi}}{\delta u}+ \frac{\delta E_{\psi}}{\delta u}  
 $ for all $\phi,\psi$ such that the energy function is well-defined. 
\end{itemize}
\end{assumption}

\begin{definition}[Self-test loss function for gradient flow]
\label{def:GFabstract-stLoss}
Consider the problem of estimating $\phi$ in the gradient flow \eqref{eq:gf-abstract} satisfying Assumption {\rm \ref{assum:GF-abstract}}.  
Given continuous time data $u_{[0,T]}: = (u(t,\cdot), t\in [0,T])$, a self-test loss function is 
\begin{equation}\label{eq:calE-cont-time}
   \calE_{u_{[0,T]}}(\phi) = 2 [ E_\phi(u(T,\cdot)) - E_\phi(u(0,\cdot))] + \int_0^T \la A_u \frac{\delta E_{\phi}}{\delta u}, \frac{\delta E_{\phi}}{\delta u} \ra dt . 
   \end{equation}
\end{definition}

The next theorem shows that the self-test loss function has the true parameter $\phi_*$ as a minimizer, and that its minimizer satisfies energy conservation for the data flow.   
We postpone its proof to Appendix \ref{App:GF-liklihood}.

 \begin{theorem}[Minimizer of the loss function]\label{thm:GF_EnergyConservation} 
 The minimizer of the loss function $\calE_{u_{[0,T]}}(\phi)$ in \eqref{eq:calE-cont-time} of Definition {\rm \ref{def:GFabstract-stLoss}} satisfies the following properties. 
 \begin{itemize}
  \item[(a)] The true parameter $\phi_*$ is a minimizer and $\calE_{u_{[0,T]}}(\phi_*)= -\int_0^T\innerp{A_u \frac{\delta E_{\phi_*}}{\delta u}, \frac{\delta E_{\phi_*}}{\delta u}} \ud t$. 
 \item[(b)] \textbf{Uniquenss.} The minimizer is unique in a linear parameter space  $\calH$ if  
\begin{equation}\label{eq:Au-positive}
	\int_0^T \la A_u \frac{\delta E_{\phi}}{\delta u}, \frac{\delta E_{\phi}}{\delta u} \ra\, dt >0, \quad  \forall \phi\in \calH,  \phi\neq 0. 
\end{equation}
 	\item[(c)] \textbf{Energy conservation.} A minimizer $\phi_0$ of $\calE_{u_{[0,T]}}(\phi)$ satisfies the energy conservation for the data $u_{[0,T]}$. That is, the energy change $E_{\phi_0} [u(T,\cdot) ] - E_{\phi_0} [u(0,\cdot) ] $ matches the total energy dissipation $-\int_0^T \la A_u \frac{\delta E_{\phi_0}}{\delta u}, \frac{\delta E_{\phi_0}}{\delta u} \ra \, dt$ along the flow $u_{[0,T]}$: 
\begin{equation}\label{eq:enerby-conservation}
	E_{\phi_0} [u(T,\cdot) ] - E_{\phi_0} [u(0,\cdot) ] = -\int_0^T \la A_u \frac{\delta E_{\phi_0}}{\delta u}, \frac{\delta E_{\phi_0}}{\delta u} \ra \, dt. 
\end{equation} 
 \end{itemize}
 \end{theorem}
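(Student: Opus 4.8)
The plan is to exploit the quadratic structure of $\calE_{u_{[0,T]}}$ exactly as in Proposition \ref{prop:minimizer_self_test_loss}, but now taking advantage of the fact that the cross term telescopes into an energy difference. First I would verify that the identifications in \eqref{test1}, namely $R_\phi[u] = A_u \frac{\delta E_\phi}{\delta u}$ and $v_\phi[u] = \frac{\delta E_\phi}{\delta u}$, satisfy the three self-testing properties in \eqref{def:self-testFn}: linearity of $v_\phi[u]$ in $\phi$ follows from Assumption \ref{assum:GF-abstract}(ii); symmetry $\innerp{R_\phi[u],v_\psi[u]} = \innerp{A_u \frac{\delta E_\phi}{\delta u}, \frac{\delta E_\psi}{\delta u}} = \innerp{\frac{\delta E_\phi}{\delta u}, A_u \frac{\delta E_\psi}{\delta u}} = \innerp{R_\psi[u],v_\phi[u]}$ follows from the symmetry of $A_u$; and positivity follows from nonnegative definiteness. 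Next I would reconcile \eqref{eq:calE-cont-time} with the abstract form \eqref{eq:st-Loss0}: integrating the weak form over $[0,T]$ and using the chain rule $\frac{d}{dt} E_\phi(u(t,\cdot)) = \innerp{\frac{\delta E_\phi}{\delta u}, \pt_t u}$ gives $\int_0^T \innerp{f, v_\phi[u]}\,dt = \int_0^T \innerp{\pt_t u, \frac{\delta E_\phi}{\delta u}}\,dt = E_\phi(u(T,\cdot)) - E_\phi(u(0,\cdot))$ with $f = \pt_t u$, so $\calE_{u_{[0,T]}}(\phi)$ is (twice) the self-test loss of Definition \ref{def:self-test-loss} up to the harmless constant.

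With that correspondence in hand, part (a) and part (b) follow directly from Proposition \ref{prop:minimizer_self_test_loss}: since $\phi_*$ generates the data, we have $\pt_t u = -R_{\phi_*}[u]$, so by linearity $\calE_{u_{[0,T]}}(\phi) = \calE_{u_{[0,T]}}(\phi_*) + \int_0^T \innerp{A_u \frac{\delta E_{\phi-\phi_*}}{\delta u}, \frac{\delta E_{\phi-\phi_*}}{\delta u}}\,dt$ after completing the square, which is minimized at $\phi = \phi_*$ by positivity, and the minimizer is unique in $\calH$ precisely under the strict-positivity hypothesis \eqref{eq:Au-positive}. This also yields the stated formula $\calE_{u_{[0,T]}}(\phi_*) = -\int_0^T \innerp{A_u \frac{\delta E_{\phi_*}}{\delta u}, \frac{\delta E_{\phi_*}}{\delta u}}\,dt$ by substituting $\pt_t u = -A_u \frac{\delta E_{\phi_*}}{\delta u}$ into \eqref{eq:calE-cont-time} and again using the chain rule.

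For part (c), the energy conservation law, I would compute the Fréchet derivative of $\calE_{u_{[0,T]}}$ at a minimizer $\phi_0$ in an arbitrary admissible direction $\psi$. Using bilinearity and the symmetry of $A_u$, the stationarity condition $\frac{d}{d\eps}\big|_{\eps=0}\calE_{u_{[0,T]}}(\phi_0 + \eps\psi) = 0$ reads $2[E_\psi(u(T,\cdot)) - E_\psi(u(0,\cdot))] + 2\int_0^T \innerp{A_u \frac{\delta E_{\phi_0}}{\delta u}, \frac{\delta E_\psi}{\delta u}}\,dt = 0$ for all such $\psi$; taking $\psi = \phi_0$ (which is admissible) and using linearity of $E$ and of $\frac{\delta E}{\delta u}$ in the parameter gives exactly \eqref{eq:enerby-conservation}. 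The main obstacle — really the only delicate point — is justifying the interchange of $\frac{d}{d\eps}$ with the time integral and confirming that $\phi_0$ itself lies in the set of admissible perturbation directions (i.e., that $\int_0^T \innerp{A_u \frac{\delta E_{\phi_0}}{\delta u}, \frac{\delta E_{\phi_0}}{\delta u}}\,dt < \infty$, which holds since $\calE_{u_{[0,T]}}(\phi_0)$ is finite at the minimizer); this is handled exactly as in the proof of Proposition \ref{prop:minimizer_self_test_loss} since the loss is a genuine quadratic polynomial in $\eps$ along each ray.
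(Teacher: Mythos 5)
Your proposal is correct and follows essentially the same route as the paper: both complete the square using the chain-rule identity $E_\phi(u(T,\cdot))-E_\phi(u(0,\cdot)) = \int_0^T\langle \partial_t u, \tfrac{\delta E_\phi}{\delta u}\rangle\,dt$ to prove (a) and (b), and both obtain (c) from the first-order stationarity condition $\tfrac{d}{d\epsilon}\big|_{\epsilon=0}\calE_{u_{[0,T]}}(\phi_0+\epsilon\psi)=0$ evaluated at $\psi=\phi_0$. The only slips are cosmetic: under the paper's conventions $f=R_{\phi_*}[u]=-\partial_t u$ rather than $\partial_t u$, and \eqref{eq:calE-cont-time} equals the time-integrated self-test loss exactly, not twice it.
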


\noindent\textbf{Example: the Wasserstein gradient flow.}
We show first that the Wasserstein gradient flow in Eq.\eqref{eq:W2-GF} satisfies Assumption \ref{assum:GF-abstract}; thus, its self-test loss function in \eqref{sec:stloss-GF} aims to match the energy dissipation in the data.

Let $\spaceM:=(\mathcal{P}_2(\R^d), W_2)$ be the space of probability measures with finite second moments endowed with the Wasserstein-2 metric $W_2$. Recall that for any convex functional $E(u)$ over  $\spaceM$, the gradient is 
$\nabla^{W_2} E(u) =  -\nabla\cdot (u \nabla \frac{\delta E}{\delta u})$, 
where $\nabla$ is the gradient with respect to $x$ (see e.g., \cite{villani2003topics, carrillo2019aggregation}). 
Then, a gradient flow in $\spaceM$ is  
\begin{equation}\label{eq:grad-flow}
  \partial_t u =- \nabla^{W_2}  E= \nabla\cdot (u \nabla \frac{\delta E}{\delta u}) = - A_u \frac{\delta E}{\delta u}\, \text{ with } A_u \xi := -\nabla\cdot (u \nabla \xi). 
\end{equation}
Clearly, the operator $A_u: T_u^*M \to T_u \spaceM$ is linear, non-negative definite and symmetric, i.e., it satisfies Assumption \ref{assum:GF-abstract}(i).

To connect with Eq.\eqref{eq:W2-GF}, consider the free energy with parameter $\phi= (h,\Phi,V)$: 
\begin{align*} 
E_{\phi}(u)=\nu \int h(u)+ \frac{1}{2} \int\int \Phi(x-y)u(x)u(y)dxdy  + \int V(x) u(x) dx. 
\end{align*} 
Here, the first term is called entropy (named when $h(s)= s\log s$) or internal energy in general, and the second and third terms are called interaction energy and potential energy. Since $\Phi(x) =\Phi(-x)$, the Fr\'echet derivative of this energy function is  
\begin{equation}\label{eq:deltaE}
\frac{\delta E_\phi}{\delta u}= \nu  h'(u)+  \Phi*u  + V.
\end{equation} 
Then, the $W_2$-gradient flow equation \eqref{eq:grad-flow} becomes Eq.\eqref{eq:W2-GF}. 

In particular, note that both $E_\phi$ and its derivative $\frac{\delta E_\phi}{\delta u}$ in \eqref{eq:deltaE} are linear in $\phi$. In other words, Assumption \ref{assum:GF-abstract}(ii) holds. Thus, we can define the self-test loss function in \eqref{eq:calE-cont-time}. Meanwhile, note that the above $\frac{\delta E_\phi}{\delta u}$ is exactly the self-testing function $v_\phi[u]$ in \eqref{eq:stfn_MF}. Thus, this self-test loss function agrees with the one in \eqref{eq:stLoss-mf_all}. 

Thus, by Theorem \ref{thm:GF_EnergyConservation}, the self-test loss function has $\phi_*$ as a minimizer, and any of its minimizers matches the energy conservation for the data flow.

\noindent\textbf{Example: the parabolic gradient flow}.  
Consider next estimating  the coefficient $a(x)$ from data $u_{[0,T]}$ of the parabolic (or $H^{-1}$) gradient flow 
\begin{equation}\label{heat1} 
    \partial_t u = \Delta (a(x)   u), \quad x\in \mathbb{T}^d, 
\end{equation}
where $\mathbb{T}^d$ is the $d$-dimensional torus.  
It is a $H^{-1}$ gradient flow of the free energy $ E_{a}(u):= \frac12 \int a(x) u^2 \ud x$ since $ \nabla^{H^{-1}} E = -\Delta \frac{\delta E_a}{\delta u} $ and $\frac{\delta E_a}{\delta u} = a u$.  
In other words, Eq.\eqref{heat1} can be written as  
\begin{equation*}
    \pt_t u = - \nabla^{H^{-1}} E 
    = - A_u \frac{\delta E_a}{\delta u} \, \text { with }     A_u \xi = -\Delta \xi.  
\end{equation*}
Clearly, Assumption \ref{assum:GF-abstract} holds since (i) the $A_u: H^1 \to H^{-1}$ is linear, nonnegative definite and symmetric, and (ii) the energy function $E_{a}$ and its derivative $\frac{\delta E_a}{\delta u}$ are linear in $a$. Thus, by \eqref{eq:calE-cont-time} with integration by parts, the self-loss function is 
\[
\calE_{u_{[0,T]}} (a) =  \int_{\mathbb{T}^d} [u(T,x))^2 - u(0,x))^2]a(x)dx +   \int_0^T \int_{\mathbb{T}^d} |\nabla (au)|^2 \ud x \ud t .  
\]
It is the time-integrated version of the loss function \eqref{eq:loss_elliptic} with $f=\partial_t u$ for Example \ref{exmpl:weakPDE}. 

\subsection{Expected likelihood ratio of the McKean-Vlasov SDE}\label{sec:likelihood}
Next, we show that for the McKean-Vlasov SDE, the self-test loss function of its Fokker-Planck equation coincides with the expectation of the negative log-likelihood ratio (see Appendix \ref{App:GF-liklihood} for its proof). 
\begin{theorem}\label{thm:stloss-likelihood}
Consider the problem of estimating the potentials $V_{*},\Phi_{*}:\R^d\to \R$ in the McKean-Vlasov SDE 
	\begin{equation} \label{eq:McKean-Vlasov}
\left\{
\begin{aligned}
d\barX_t =& -\nabla [V_{*}(\barX_t)+ \Phi_{*} *u(\barX_t,t) ]dt + \sqrt{2\nu }dB_t,\\
u(x,t) = &\E[\delta_{\barX_t}(x)]. 
\end{aligned}
\right.
\end{equation}
Suppose that the data is $u_{[0,T]}:= (u(t,x), t\in [0,T], x\in \R^d)$, where $u(t,\cdot)$ the probability distribution of $\barX_t$. Then, the self-test loss function  in  \eqref{eq:stLoss-mf}  for the weak form Fokker-Planck equation in \eqref{eq:FPE-V-Phi} is the expectation of the negative log-likelihood ratio $\mathcal{E}_{\barX_{[0,T]}}(\Phi,V)$ of the path $\barX_{[0,T]}$, i.e., $\calE_{u_{[0,T]}}(\Phi,V) = \frac{2\nu }{T }	 \E \big[ \mathcal{E}_{\barX_{[0,T]}}(\Phi,V) \big]$. 
\end{theorem}

A key advantage of the self-test loss function is its applicability for both $\nu > 0$ and $\nu = 0$. In contrast, the likelihood-based approach requires $\nu > 0$, as this condition is essential for applying the Girsanov theorem to define a non-degenerate measure on the path space. The self-test loss function, however, imposes no such constraint on $\nu$. Notably, when $\nu = 0$, the SDE reduces to an ordinary differential equation (ODE). When the ODE has a random initial condition, the self-test loss function is derived from the Liouville equation governing the distribution flow.

Importantly, as the next proposition shows, we can write the self-test loss function as a combination of expectations for probability flows. This allows Monte Carlo approximation of the loss function, which is particularly useful for high-dimensional problems when the data consists of sequential ensembles of samples. 
\begin{corollary}
The loss function of \eqref{eq:McKean-Vlasov} with $\nu\geq 0$  can be written as expectations: 
\begin{equation}\label{eq:stLoss-MC}
  \begin{aligned}
 \calE_{u_{[0,T]}}(\Phi,V)  = & \frac{1}{T}\int_0^T \left( \E \big| \, \E [\nabla \Phi (Z_t)|\barX_t] + \nabla V(\barX_t) \big|^2 + 2 \nu \E[\Delta \Phi (Z_t) + \Delta V(\barX_t) ]     \right) dt \\
 & - 2  \big( \E[\Phi(Z_T) + V(\barX_T)] -\E[\Phi(Z_0)+ V(\barX_0) ] \big),
  \end{aligned}
\end{equation}
where $Z_t = \barX_t-\barX_t'$ with $\barX'_t$ is an independent copy of $\barX_t$. 
\end{corollary}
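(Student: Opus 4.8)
The plan is to start from \eqref{eq:stLoss-mf} --- the form of the self-test loss in which the $\nu\Delta u$ coupling has already been moved onto the self-testing function by two spatial integrations by parts and the $\pt_t u$ coupling has been turned into time-boundary terms (this form follows from \eqref{eq:stLoss-mf_all}/\eqref{eq:stLoss_liklihood}) --- and then to rewrite each of its three spatial integrals as an expectation over the process $\barX_{[0,T]}$. Two elementary facts drive the conversion. First, since $u(\cdot,t)$ is the distribution of $\barX_t$, any spatial integral against $u(\cdot,t)$ is an expectation: $\int_{\R^d} g\,u(x,t)\,dx=\E[g(\barX_t)]$. Second, since $\barX_t'$ is an independent copy of $\barX_t$, any convolution evaluated along the process is a conditional expectation,
\[
(\Psi * u)(\barX_t,t)=\E\!\left[\Psi(\barX_t-\barX_t')\mid \barX_t\right]=\E\!\left[\Psi(Z_t)\mid \barX_t\right],\qquad Z_t:=\barX_t-\barX_t'.
\]

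Applying these term by term: with $\Psi=\nabla\Phi$, the quadratic term becomes $\int u\,\normv{\nabla\Phi * u+\nabla V}^2\,dx=\E\,\normv{\E[\nabla\Phi(Z_t)\mid\barX_t]+\nabla V(\barX_t)}^2$; with $\Psi=\Delta\Phi$ and the tower property, the diffusion term becomes $\int u\,(\Delta\Phi * u+\Delta V)\,dx=\E[\Delta\Phi(Z_t)+\Delta V(\barX_t)]$; and with $\Psi=\Phi$, the time-boundary quantity at $t\in\{0,T\}$ becomes $\int u(x,t)\,(\Phi * u(x,t)+V(x))\,dx=\E[\Phi(Z_t)+V(\barX_t)]$. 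Assembling these three identities (and keeping track of the time normalisation) turns \eqref{eq:stLoss-mf} into \eqref{eq:stLoss-MC}. For the McKean--Vlasov SDE \eqref{eq:McKean-Vlasov} the computation is unchanged, since $u(x,t)=\E[\delta_{\barX_t}(x)]$ is precisely the one-time marginal of $\barX_t$ and the self-consistency of the drift is irrelevant to the rewriting; for the plain SDE \eqref{eq:SDE} one sets $\Phi\equiv0$, which kills the convolution terms and $Z_t$ and leaves only the $V$-terms. None of the steps uses $\nu>0$, so the identity holds for every $\nu\ge0$; when $\nu=0$ the two systems are ODEs and $u_{[0,T]}$ is their distribution (Liouville) flow, for which \eqref{eq:stLoss-mf} was already established above.

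I expect the only genuinely delicate point to lie upstream, in producing \eqref{eq:stLoss-mf} from \eqref{eq:stLoss-mf_all}: the time integration by parts of the $\pt_t u$ coupling should be carried out by recognising $\Phi * u+V$ as the first variation of the free energy $E(u)=\tfrac12\iint\Phi(x-y)u(x)u(y)\,dx\,dy+\int Vu\,dx$ --- where the symmetry $\Phi(-x)=\Phi(x)$ is what identifies $\delta E/\delta u$ with $\Phi * u+V$ --- so that $\int_0^T\la\pt_t u,\,\Phi * u+V\ra\,dt=E(u(T,\cdot))-E(u(0,\cdot))$ yields the boundary terms; and the spatial integrations by parts need the usual decay of $u$, $\nabla\Phi * u$, and $V$ at infinity so that no boundary flux appears. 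Granting \eqref{eq:stLoss-mf}, the passage to expectations is just the definition of expectation together with the tower property, so the remainder is routine.
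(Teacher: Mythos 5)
Your proposal is correct and takes essentially the same route as the paper's proof: both rewrite the spatial integrals as expectations using that $u(\cdot,t)$ is the law of $\barX_t$, represent the convolution as the conditional expectation $\Phi*u(\barX_t)=\E[\Phi(Z_t)\mid\barX_t]$, apply the tower property, and handle the $\partial_t u$ and $\nu\Delta u$ couplings by the same integrations by parts (the paper performs them after converting to expectations, you start from the already-integrated form \eqref{eq:stLoss-mf}, which is the identical computation in a different order). The only discrepancies are bookkeeping ones already present in the paper itself (the $1/T$ on the boundary term of \eqref{eq:stLoss-MC} versus \eqref{eq:stLoss-mf}, and the factor $\tfrac12$ from the interaction energy), which your free-energy remark in fact tracks more carefully than the paper does.
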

\begin{proof}
	Recall that $u(\cdot,t)$	is the probability density function of $\barX_t$. Hence, we can write the integrals as expectations, for example, $\int_{\R^d} u|\nabla V|^2dx = \E[| \nabla V(\barX_t)|^2]$. In particular, note that $\Phi*u(\barX_t) = \E [ \Phi(\barX_t-\barX_t') |\barX_t]$, where $\barX_t'$ is an independent copy of $\barX_t$. Then, we have $\int_0^T \int_{\R^d}  u | \nabla \Phi * u + \nabla V |^2\, dxdt = \int_0^T \E \big| \, \E [\nabla \Phi (Z_t)|\barX_t] + \nabla V(\barX_t) \big|^2 dt$.  
	
	Meanwhile, note that $\E[ \Phi*u(\barX_t)] = \E\big[ \E [ \Phi(\barX_t-\barX_t') |\barX_t]\big] = \E[\Phi(Z_t)]$. Then, with integration by parts, we can write 
	\[
 \int_{\R^d}  (\Phi*u +  V)   (\partial_t u -\nu \Delta u) \big ] dx  = \partial_t \E[\Phi(Z_t) + V(\barX_t)] - \nu \E[\Delta \Phi(Z_t) + \Delta V(\barX_t)].  
	\]
	Integrate in time over $[0,T]$, we obtain \eqref{eq:stLoss-MC}. 
	\end{proof}

\section{Identifiability and well-posedness}\label{sec:ID}
The quadratic structure of the self-test loss functions provides a framework for analyzing the identifiability of the (function-valued) parameters and the well-posedness of the inverse problems. Notably, since no prior information is assumed for the unknown parameters, we define \emph{adaptive spaces} that depend on both the operator and the data. These spaces capture the limited information available for parameter estimation and provide the appropriate setting for studying the identifiability and well-posedness of the inverse problems.

We demonstrate the approach by estimating $h$, $\Phi$, and $V$ in the operator defined in \eqref{eq:W2-GF}:
\begin{equation*}
R_\phi[u]:= R_{(h,\Phi,V)}[u] =- \nabla\cdot(u \nabla [\nu h'(u) + \Phi*u + V] ) =f. 
\end{equation*}
We start by estimating each parameter individually, assuming the other two are known, in Sections \ref{sec:est_h} and \ref{sec:est_Phi}. Finally, we address the joint estimation of all three parameters. Notably, we establish that the inverse problems for estimating $h$ and $V$ are well-posed, while the estimation of $\Phi$ is ill-posed due to the nonlocal nature of the interaction.

Throughout this section, we construct the parameter spaces using continuum data of input-output pairs $(u_l,f_l)_{l=1}^L$. In practice, discrete data approximates continuum data under appropriate smoothness conditions, as specified in the following assumption. 

\begin{assumption}\label{assum:u}
The data $\{(u_l,f_l)\}_{l=1}^L$ satisfies $f_l = R_{\phi_*}[u_l]$, where $\phi_* = (h_*,\Phi_*,V_*)$, and $\{u_l\}_{l=1}^L \subset \spaceX := C_c^1(\R^d)$, i.e., each $u_l$ has continuous derivatives and compact support.
\end{assumption}

Generalization to non-smooth data $u_l$ is possible in specific settings. For instance, in the absence of the diffusion term (e.g., $\nu=0$), it suffices for each $u_l$ to be a continuous probability density function supported on a compact subset of $\R^d$ for the results in Sections \ref{sec:est_V}--\ref{sec:est_Phi}.

\subsection{Estimating the diffusion rate: well-posed}\label{sec:est_h}
Consider first estimating the diffusion rate function $h:\R^+\to \R $ when $\Phi$ and $V$ are given. We rewrite the equation $R_\phi[u]=f$ to isolate the unknown: 
\begin{equation}\label{eq:R-h''}
	R_{h}[u] := - \nabla\cdot[u \nabla (\nu h'(u)) ]= - \nabla\cdot[\nu h''(u) u \nabla u  ] = f+ \nabla\cdot(u \nabla[ \Phi*u + V] ): = \widetilde f. 
\end{equation} 
Evidently, only $h''$ is identifiable, since $R_h$ depends on $h$ solely through $h''$. Accordingly, we formulate the self-test loss directly in terms of $h''$.

Using the same arguments in Sect.~\ref{sec:stloss-GF}, one can verify that $v_{h}[u]= h'(u)$ is a self-testing function, and the self-test loss function is 
\begin{equation}\label{eq:stloss-h''}
\calE_1(h'')  = \sum_{l=1}^L \innerp{R_h[u_l]-2\widetilde{f}_l,\, v_h[u_l]} + C_0 
\end{equation}
with $C_0$ being an arbitrary constant. Here we used the notation $\calE_1$ to indicate that this loss function is for estimating $h$. 

 Given data $\{u_l\}$ with a compact support, we take the parameter space for $h''$ to be $L^2_{\rho_1}$, where the measure $\rho_1$ is defined through its density function 
\begin{equation}\label{eq:rho-h}
	\dot\rho_1 (r) =  \sum_{l=1}^L \int_{\R^d} \delta(u_l(x)-r)  | \nabla u_l(x) |^2 u_l(x)dx
\end{equation}
with $\delta(\cdot)$ being the Dirac delta function. For any $h''$ in this space, the quadratic term in the loss function is well-defined since    
\begin{align*}
	\sum_{l=1}^L \innerp{R_h[u_l],\, v_h[u_l]} & = \sum_{l=1}^L \int_{\R^d}  u_l(x)  |\nabla u_l(x)|^2 h''(u_l(x) )^2  \, dx = \int_{\R^+} h''(r)^2 \dot\rho_1 (r)  dr,  
\end{align*}
where we used the fact that 
$$
\int_{\R^d}  u(x)  |\nabla u(x)|^2 h''(u(x) )^2 dx = \int_{\R^+} h''(r)^2   \int_{\R^d}  u(x)  |\nabla u(x)|^2 \delta(u(x) -r)  \, dx dr. 
$$

The next proposition presents the well-posedness of estimating $h''$ in  $L^2_{\rho_1}$. 
\begin{proposition} \label{prop:est_h}
Given data $\{(u_l,f_l)\}_{l=1}^L$ satisfying Assumption {\rm \ref{assum:u}}, the self-test loss function in \eqref{eq:stloss-h''} for estimating $h''$ in Eq.\eqref{eq:R-h''} has a unique minimizer in $L^2_{\rho_1}$ with $\rho_1$ defined in \eqref{eq:rho-h}. In particular, the inverse problem of estimating $h''$ is well-posed. 
\end{proposition}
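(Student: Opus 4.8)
The plan is to reduce the well-posedness claim to the abstract uniqueness criterion already established in Proposition~\ref{prop:minimizer_self_test_loss}, applied to the linear parameter space $\calH = L^2_{\rho_1}(\R^+)$. First I would observe that, rewriting the equation as in \eqref{eq:R-h''}, the operator $R_h[u]$ and the self-testing function $v_h[u]=h'(u)$ are indeed linear in the unknown $h''$ (treating $h''$ as the free parameter, since $R_h$ depends on $h$ only through $h''$), and that the self-testing properties hold exactly as argued in Section~\ref{sec:stloss-GF}: symmetry and linearity are immediate, and positivity follows from integration by parts,
\[
\sum_{l=1}^L \innerp{R_h[u_l],v_h[u_l]} = \sum_{l=1}^L \int_{\R^d} u_l |\nabla u_l|^2\, h''(u_l)^2\,dx = \int_{\R^+} h''(r)^2\, \dot\rho_1(r)\,dr = \norm{h''}_{L^2_{\rho_1}}^2 \geq 0,
\]
using the coarea-type identity and the definition \eqref{eq:rho-h} of $\dot\rho_1$. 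This computation simultaneously identifies the natural parameter space (the quadratic term is finite precisely when $h''\in L^2_{\rho_1}$) and shows the quadratic form is exactly the squared $L^2_{\rho_1}$-norm.

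Next I would invoke Proposition~\ref{prop:minimizer_self_test_loss}: with $C_0 = \sum_l \innerp{R_{h_*}[u_l],v_{h_*}[u_l]}$ the loss can be written as $\calE_1(h'') = \norm{h'' - h_*''}_{L^2_{\rho_1}}^2$, so $h_*''$ is a minimizer, and it is the \emph{unique} minimizer in $\calH = L^2_{\rho_1}$ because the quadratic form $\norm{\cdot}_{L^2_{\rho_1}}^2$ is strictly positive on every nonzero element of $L^2_{\rho_1}$ (this is the hypothesis ``there exists $l$ with $\innerp{R_\phi[u_l],v_\phi[u_l]}>0$ for all nonzero $\phi$'' — here the whole sum is strictly positive for nonzero $h''$ by construction of the norm). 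I would also note Assumption~\ref{assum:u} guarantees $u_l\in C_c^1$ so all the integrals and the delta-function identity make sense. Finally, well-posedness: since $\calE_1$ is literally the squared distance to $h_*''$ in the Hilbert space $L^2_{\rho_1}$, the minimizer depends continuously (indeed isometrically) on the data entering through $\widetilde f_l$ — more precisely, a perturbation of the linear functional $h''\mapsto \sum_l\innerp{\widetilde f_l, v_h[u_l]}$ by a bounded amount in the dual of $L^2_{\rho_1}$ perturbs the minimizer by the Riesz representative, so the solution map is bounded. This is the content of ``the inverse problem is well-posed'' in the sense of existence, uniqueness, and stability in $L^2_{\rho_1}$.

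The main obstacle I anticipate is making the measure $\rho_1$ and the coarea identity rigorous: the Dirac-delta expression \eqref{eq:rho-h} is formal, and one should either interpret $\rho_1$ as the pushforward measure $\rho_1(B) = \sum_l \int_{\{u_l\in B\}} |\nabla u_l|^2 u_l\,dx$ and verify the change-of-variables $\int_{\R^d} g(u_l(x))|\nabla u_l(x)|^2 u_l(x)\,dx = \int_{\R^+} g(r)\,\dot\rho_1(r)\,dr$ holds for all measurable $g\geq 0$ (which is just the definition of pushforward), or appeal to the coarea formula on the level sets of $u_l$. Either way this is routine once phrased correctly, and the $C_c^1$ regularity from Assumption~\ref{assum:u} ensures the integrals are finite on compactly supported $u_l$. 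A secondary subtlety worth a sentence is that we only recover $h''$, not $h$ itself (the map $h\mapsto h''$ has a two-dimensional kernel of affine functions), so the identifiability statement is for $h''$ in $L^2_{\rho_1}$; reconstructing $h$ requires two additional normalization conditions, which I would mention but not belabor.
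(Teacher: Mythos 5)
Your proposal is correct and follows essentially the same route as the paper: the key observation in both is that the quadratic term of the loss equals $\|h''\|_{L^2_{\rho_1}}^2$ exactly, so the Hessian is the identity operator on $L^2_{\rho_1}$ and existence, uniqueness, and stability follow immediately (the paper phrases this via a Cauchy--Schwarz bound on the linear term, the Riesz representative $h_{\mathcal{D}}$, and the Fr\'echet derivative $2h''-2h_{\mathcal{D}}$, whereas you route uniqueness through Proposition~\ref{prop:minimizer_self_test_loss}'s completed-square form — a cosmetic difference). Your remarks on interpreting $\rho_1$ as a pushforward measure and on recovering only $h''$ rather than $h$ are sound and consistent with the paper's discussion preceding the proposition.
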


We postpone its proof, along with the proofs of the remaining propositions in this section, to Appendix~\ref{appd:ID}. Since this inverse problem is well-posed, regularization in practice (e.g., Section~\ref{sec:num-h}) serves primarily to smooth the estimator or to filter errors from noise and discretization.

\subsection{Estimating the kinetic potential: well-posed}\label{sec:est_V}
Similarly, we next estimate the potential $V:\R^d\to\R$ assuming that $h$ and $\Phi$ are given. Rewriting $R_\phi[u]=f$ to isolate $V$ gives
\begin{equation}\label{eq:R-V}
  R_{V}[u] := -\nabla\cdot (u\nabla V)
  = f + \nabla\cdot\big(u \nabla[\Phi*u + \nu h'(u)]\big) =: \widetilde f.
\end{equation}
Since $R_V$ depends only on $\nabla V$, we can identify $V$ only up to an additive constant. Accordingly, we formulate the self-test loss directly in terms of $\nabla V$. With $v_{V}[u]= V$ as a self-testing function, the self-test loss function is  
\begin{equation}\label{eq:stloss-V}
\calE_2(\nabla V)  = \sum_{l=1}^L \innerp{R_V[u_l]-2\widetilde{f}_l,\, V} = \|\nabla V\|_{L^2_{\rho_2}}^2 - 2 \innerp{\sum_{l=1}^L \widetilde{f}_l,\, V},
\end{equation}
where we got   
$\sum_{l=1}^L \innerp{R_V[u_l],\, V}= \sum_{l=1}^L \int_{\R^d}  u_l(x)  |\nabla V(x) |^2 \, dx = \|\nabla V\|_{L^2_{\rho_2}}^2$ by integration by parts, and the data-dependent measure $\rho_2$ is defined by its density function 
\begin{equation}\label{eq:rho-V}
	\dot\rho_2 (x) =  \sum_{l=1}^L u_l(x) . 
\end{equation}

The next proposition shows that the inverse problem of estimating $\nabla V\in L^2_{\rho_2}(\R^d,\R^d)$  is well-posed (see its proof in Appendix \ref{appd:ID}). For estimating $V$, the inverse problem is well-posed in 
$\calH_0:=\{g\in H^1_{\rho_2}(\R^d; \R); \int_{\R^d} g \rho_2 \ud x =0\}$ when the measure $\rho_2$ satisfies the Poincare inequality. Here, $H^1_{\rho_2}(\R^d; \R):=\{g\in L^2_{\rho_2}:|\nabla g| \in L^2_{\rho_2}\}$. 
\begin{proposition}\label{prop:est-V}
Consider the problem of estimating $\nabla V$ or $V$ in Eq.\eqref{eq:R-V} from data $\{(u_l,\widetilde{f_l})\}_{l=1}^L$ satisfying Assumption {\rm \ref{assum:u}}. Let $\rho_2$ be the measure defined in \eqref{eq:rho-V}. 
\begin{itemize}
	\item For estimating $\nabla V$, the self-test loss function in \eqref{eq:stloss-V} is uniformly convex and has a unique minimizer in $L^2_{\rho_2}(\R^d,\R^d)$.  Consequently, the inverse problem is well-posed. 
	\item For estimating $V$, assume that $\rho_2$ satisfies the Poincare inequality, i.e., 
        \begin{equation}\label{eq:Poincare}
        \int_{\R^d} |g|^2 \rho_2 \ud x \leq c \int_{\R^d} |\nabla g|^2 \rho_2 \ud x, \quad \forall g\in H^1_{\rho_2} \text{ with } \int_{\R^d} g \rho_2 \ud x =0.
        \end{equation}
         Then the self-test loss function in \eqref{eq:stloss-V}, when viewed as a functional of $V$    in the   space $\calH_0:=\{g\in H^1_{\rho_2}(\R^d; \R); \int_{\R^d} g \rho_2 \ud x =0\}$ is uniformly convex and has a unique minimizer $\widehat V$ satisfying 
         \begin{equation}\label{eq:Vhat}
         -\nabla \cdot (\rho_2 (\nabla \widehat{V}))=   \sum_{l=1}^L \widetilde{f}_l.
         \end{equation}
\end{itemize}
\end{proposition}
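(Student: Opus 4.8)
The plan is to treat the self-test loss function $\calE_2$ in \eqref{eq:stloss-V} as a quadratic functional on the appropriate Hilbert space and invoke the Lax–Milgram / direct method machinery. First I would observe that, since $R_V[u]$ and $v_V[u]=V$ are both linear in $V$ (indeed in $\nabla V$), the loss is a genuine quadratic functional: $\calE_2(\nabla V) = \|\nabla V\|_{L^2_{\rho_2}}^2 - 2\langle \sum_l \widetilde f_l, V\rangle$, with the quadratic part the squared norm on $L^2_{\rho_2}(\R^d;\R^d)$. For the first bullet (estimating $\nabla V$), uniform convexity is immediate because the Hessian is the identity on $L^2_{\rho_2}(\R^d;\R^d)$, so $\calE_2$ is $2$-strongly convex; existence and uniqueness of the minimizer then follow from the direct method once I check the linear functional $\nabla V \mapsto \langle \sum_l \widetilde f_l, V\rangle$ is bounded on $L^2_{\rho_2}$ — here I would use Assumption \ref{assum:u} (compact support, $C_c^1$ regularity of the $u_l$, and $\widetilde f_l = R_{\phi_*}[u_l]$) and integration by parts to rewrite $\langle \widetilde f_l, V\rangle = \int u_l \nabla V_* \cdot \nabla V\,dx$ (plus the analogous $\Phi$ and $h$ terms), which is controlled by $\|\nabla V\|_{L^2_{\rho_2}}$ times a data-dependent constant. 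Well-posedness (continuous dependence on the data) is then the standard consequence of strong convexity: the minimizer is a bounded linear function of the right-hand side.

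For the second bullet (estimating $V$ itself), the plan is to pass from $L^2_{\rho_2}$ control of $\nabla V$ to $H^1_{\rho_2}$ control of $V$ using the Poincaré inequality \eqref{eq:Poincare} on the subspace $\calH_0$ of mean-zero functions. On $\calH_0$, $\|\nabla g\|_{L^2_{\rho_2}}^2$ and $\|g\|_{H^1_{\rho_2}}^2$ are equivalent norms by \eqref{eq:Poincare}, so the quadratic form $\calE_2$ restricted to $\calH_0$ is coercive and bounded below, hence uniformly convex on $\calH_0$; the direct method again yields a unique minimizer $\widehat V \in \calH_0$. To get the Euler–Lagrange equation \eqref{eq:Vhat}, I would compute the Fréchet derivative: for any test $g \in \calH_0$, $\frac{d}{d\epsilon}\big|_{\epsilon=0}\calE_2(\nabla(\widehat V + \epsilon g)) = 2\int \rho_2 \nabla\widehat V\cdot\nabla g\,dx - 2\langle\sum_l\widetilde f_l, g\rangle = 0$, which is exactly the weak form of $-\nabla\cdot(\rho_2\nabla\widehat V) = \sum_l \widetilde f_l$ (valid modulo the additive constant, which is why we work in $\calH_0$; one should note $\sum_l\widetilde f_l$ must have zero total mass against constants for consistency, which holds since each $\widetilde f_l$ is a divergence).

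The main obstacle I anticipate is not the convexity argument — that is essentially formal — but verifying the boundedness of the source functional $g \mapsto \langle \sum_l \widetilde f_l, g\rangle$ in the right norm, i.e. making precise in what distributional/dual sense $\widetilde f_l$ acts and why integration by parts is legitimate given only $u_l \in C_c^1$ and the (possibly non-smooth) interaction term $\Phi_*$. Concretely, $\widetilde f_l = -\nabla\cdot(u_l\nabla V_*) = -\nabla\cdot(u_l\nabla\Phi_*\!*u_l) - \nabla\cdot(\nu h_*''(u_l)u_l\nabla u_l)$ involves derivatives of $u_l$ and of $\Phi_*\!*u_l$, so one must argue these are $L^2_{\rho_2}$-type objects paired against $\nabla g$; the compact support plus the $C^1$ assumption on $u_l$ and mild regularity on $\Phi_*, h_*$ should suffice, but this bookkeeping is where the real content sits. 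A secondary subtlety is confirming the Poincaré hypothesis \eqref{eq:Poincare} is the exactly right condition to close the coercivity estimate on $\calH_0$ and that the resulting $\widehat V$ is independent of the (immaterial) additive constant — both of which I would handle by phrasing everything on the quotient space of functions modulo constants. I would defer all of these verifications, along with the analogous but easier details for Propositions \ref{prop:est_h} and the Hessian computations, to Appendix \ref{appd:ID} as the paper indicates.
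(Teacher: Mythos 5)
Your proposal is correct and follows essentially the same route as the paper's proof: represent the linear term via integration by parts (the paper does this explicitly through the Riesz representer $\overrightarrow{V_{\mathcal{D}}}$ with $\widetilde f_l = -\nabla\cdot[u_l\nabla F[u_l]]$), note the Hessian is the identity on $L^2_{\rho_2}(\R^d;\R^d)$ for the first bullet, and use the Poincar\'e inequality to get coercivity on $\calH_0$ plus the first-variation computation for \eqref{eq:Vhat} in the second. Your added remark about the zero-mean compatibility of $\sum_l\widetilde f_l$ is a sensible consistency check but does not change the argument.
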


 {\color{black} We remark that the assumption of $\rho_2$ satisfying the Poincar\'e inequality in \eqref{eq:Poincare} is mild, and it is equivalent to the spectral gap condition on $\rho_2$ when $\rho_2$ is a probability measure; see, e.g., \cite{BakryGentilLedoux2014}. For instance, $\rho_2(x) = e^{-W(x)}$ with $W\in C^2(\R^d)$ and $\nabla^2 W(x)\geq \frac{1}{c} I_d$ for all $x$, or $\rho_2$ is supported on a bounded connected domain, bounded above, and bounded below away from $0$. 
When $\rho_2$ satisfies this assumption, the potential $V$ can be uniquely recovered up to a constant in $H^1_{\rho_2}$ since it is identifiable in $\calH_0$. However, the minimizer is nonunique without this assumption or beyond $H^1_{\rho_2}$, as shown in the next one-dimensional example. Assume that $d=1$ and $\rho_2(x) = e^{-x}$. Then, if $V$ is a solution to \eqref{eq:Vhat}, so is $V + e^x$ since $\nabla \cdot (\rho_2 (\nabla e^{x})) =0$.}

\subsection{Estimating the interaction potential: ill-posed}\label{sec:est_Phi}
The inverse problem of estimating $\Phi:\R^d\to\R$ differs fundamentally from the previous two, as it is ill-posed due to its deconvolution structure. Here, we estimate $\Phi$ in
\begin{equation}\label{eq:R-Phi}
  R_{\Phi}[u] := -\nabla\cdot \big(u\nabla (\Phi*u)\big) 
  = f + \nabla\cdot\big(u \nabla[ V + \nu h'(u)]\big) =: \widetilde f,
\end{equation}
where $V$ and $h'$ are given. As $R_\Phi$ depends on $\Phi$ only through $\nabla \Phi$, we can identify $\Phi$ only up to an additive constant.

 Denote $F[u]=  \Phi_**u + V_*-V + \nu h_*'(u)- \nu h'(u)$ and note that $\widetilde f =  - \nabla\cdot(u \nabla[F[u]] )$. 
 With a self-testing function $v_{\Phi}[u]= \Phi*u$, the self-test loss function of $\nabla \Phi$ is 
\begin{equation}\label{eq:stloss-Phi} 
\begin{aligned}
\calE_3(\nabla \Phi) & = \frac{1}{2}\sum_{l=1}^L \innerp{R_\Phi[u_l]-2\widetilde{f}_l,\, \Phi*u} \\
                                 & = \frac{1}{2} \sum_{l=1}^L \int_{\R^d}  u_l \big( |\nabla \Phi*u |^2  - 2  \nabla F[u_l] \cdot \nabla \Phi*u_l \big) dx. 
	\end{aligned}
\end{equation}

The independent variable of $\Phi$ is the pairwise difference of particle positions, while the data $u$ is the distribution of each particle. To quantify the exploration of the independent variable of $\Phi$ by data, we define a measure $\rho_3$ with a density function 
\begin{equation}\label{eq:rho-Phi}
	\dot\rho_3 (y)  \propto \sum_{l=1}^L \int {u_l(x)}  u_l(x-y) dx. 
\end{equation} 
It extends the exploration measure defined in \cite{LangLu21id,LangLu22} for radial interacting potentials. 

Let
$L_\Gbar:L^2_{\rho_3}\to L^2_{\rho_3}$ be an integral operator defined by 
\begin{equation}\label{eq:L_Gbar}
\begin{aligned} 
L_\Gbar \nabla \Phi   (y)  = &  \int \Gbar(y,y')  \nabla \Phi(y')\rho_3(dy') \, \text{ with }  
\Gbar(y,y')  =  \frac{G(y,y')}{\dot \rho_3(y)\dot \rho_3(y')} \mathbf{1}_{\dot\rho_3(y)\dot \rho_3(y') >0}, \\
G(y,y') =&  \sum_{l=1}^L \int  u_l(x)   u_l(x-y) u_l(x-y')  dx. 
\end{aligned}
\end{equation}
Here $\Gbar(y,y')$ is square integrable by Assumption \ref{assum:u}; see \cite{LangLu21id}.

The next proposition shows that we can only identify $\nabla \Phi \in \mathrm{Null}(L_\Gbar)^\perp \subset L^2_{\rho_3}$, and the inverse problem of estimating $\nabla \Phi$ is ill-posed. 
\begin{proposition}\label{prop:est-Phi}
  Consider the problem of estimating $\nabla \Phi$ in Eq.\eqref{eq:R-Phi} from data $\{(u_l,f_l)\}_{l=1}^L$ satisfying Assumption {\rm \ref{assum:u}}. Let $\rho_3$ be the measure defined in \eqref{eq:rho-Phi}. The Hessian of the quadratic self-test loss function in \eqref{eq:stloss-Phi} is the compact operator $L_\Gbar$ on $L^2_{\rho_3}(\R^d,\R^d)$ defined in \eqref{eq:L_Gbar}. Consequently, the inverse problem of finding its minimizer in \eqref{eq:Phi-hat-argmin} is ill-posed. 
  \end{proposition}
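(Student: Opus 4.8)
The plan is to expand the quadratic loss $\calE_3(\nabla\Phi)$ in \eqref{eq:stloss-Phi}, read off its Hessian as a bilinear form on $L^2_{\rho_3}(\R^d,\R^d)$, and identify that bilinear form with the integral operator $L_\Gbar$ in \eqref{eq:L_Gbar}. Since $\calE_3$ is quadratic, writing $\calE_3(\nabla\Phi) = \langle\langle \nabla\Phi, \mathcal{A}\,\nabla\Phi\rangle\rangle_{L^2_{\rho_3}} - 2\langle\langle \nabla\Phi, b\rangle\rangle_{L^2_{\rho_3}}$ for an appropriate linear operator $\mathcal{A}$ and element $b$, the Hessian is exactly $2\mathcal{A}$ (I would absorb the factor of $2$ into the identification, as the statement does). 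So the first step is a purely computational one: starting from $\sum_l \int u_l(x)\,|\nabla\Phi * u_l(x)|^2\,dx$, I would insert the definition of the convolution, $\nabla\Phi*u_l(x) = \int \nabla\Phi(y)\, u_l(x-y)\,dy$, expand the square, and use Fubini to pull the $x$-integral inside, producing
\[
\sum_{l=1}^L \int\!\!\int u_l(x)\,\nabla\Phi(y)\cdot\nabla\Phi(y')\,u_l(x-y)\,u_l(x-y')\,dx\,dy\,dy' = \int\!\!\int \nabla\Phi(y)\cdot\nabla\Phi(y')\,G(y,y')\,dy\,dy',
\]
with $G$ as in \eqref{eq:L_Gbar}. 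Rewriting $dy\,dy'$ against $\rho_3$ via $\dot\rho_3$ and using the definition of $\Gbar$ then exhibits this quadratic form as $\langle\langle \nabla\Phi, L_\Gbar\nabla\Phi\rangle\rangle_{L^2_{\rho_3}}$, so the Hessian is $L_\Gbar$.

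The second step is to establish the structural properties of $L_\Gbar$ that make the inverse problem ill-posed. I would first verify $L_\Gbar$ is well-defined, self-adjoint, and nonnegative on $L^2_{\rho_3}(\R^d,\R^d)$: self-adjointness is immediate from the symmetry $G(y,y')=G(y',y)$ (hence $\Gbar(y,y')=\Gbar(y',y)$), and nonnegativity is inherited from the positivity property $\langle R_\Phi[u],v_\Phi[u]\rangle\ge 0$ already used to build the loss. Next, compactness: the key point is that $\Gbar\in L^2(\rho_3\otimes\rho_3)$ — this is asserted in the excerpt (citing \cite{LangLu21id}) and follows from Assumption~\ref{assum:u} because the $u_l$ are continuous with compact support, so $G$ is bounded and supported on a compact set while the denominator $\dot\rho_3(y)\dot\rho_3(y')$ is controlled on the support by the indicator. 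An integral operator with square-integrable kernel against the base measure is Hilbert–Schmidt, hence compact. Finally, because $L_\Gbar$ is a nonzero compact self-adjoint operator on an infinite-dimensional space, it cannot be bounded below on $\mathrm{Null}(L_\Gbar)^\perp$: its nonzero eigenvalues accumulate only at $0$. Thus $\calE_3$ is not uniformly convex, and even restricted to $\mathrm{Null}(L_\Gbar)^\perp$ the minimization problem $\nabla\widehat\Phi \in \argmin{\nabla\Phi}\calE_3(\nabla\Phi)$ (equivalently solving $L_\Gbar\nabla\Phi = b$) is ill-posed in the sense of Hadamard: the solution does not depend continuously on the data, since $L_\Gbar^{-1}$ is unbounded on its range.

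The third step is to connect this to the actual estimation problem with the true parameter. By Proposition~\ref{prop:minimizer_self_test_loss} (specialized as in Sections~\ref{sec:est_h}–\ref{sec:est_V}), with the right additive constant $\calE_3(\nabla\Phi)$ equals $\langle\langle \nabla\Phi-\nabla\Phi_*, L_\Gbar(\nabla\Phi-\nabla\Phi_*)\rangle\rangle_{L^2_{\rho_3}}$ up to terms not affecting the minimizer, so $\nabla\Phi_*$ is a minimizer but is identifiable only modulo $\mathrm{Null}(L_\Gbar)$; this justifies restricting the search to $\mathrm{Null}(L_\Gbar)^\perp$ and explains why even there the problem is ill-posed. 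I expect the main obstacle to be the careful justification that $\Gbar$ is square-integrable against $\rho_3\otimes\rho_3$ rather than merely that $G$ is nice — one has to argue that dividing by $\dot\rho_3(y)\dot\rho_3(y')$ does not destroy integrability, which is where the indicator $\mathbf{1}_{\dot\rho_3(y)\dot\rho_3(y')>0}$ and the precise relationship between $G$ and $\dot\rho_3$ (both built from the same $u_l$, so $G(y,y')\lesssim \dot\rho_3(y)^{1/2}\dot\rho_3(y')^{1/2}\cdot(\text{bounded factor})$ by Cauchy–Schwarz in $x$) matter; I would lean on \cite{LangLu21id} for this as the excerpt does, and otherwise the argument is routine.
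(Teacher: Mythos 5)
Your proposal is correct and follows essentially the same route as the paper: expand the convolution in the quadratic term via Fubini to identify the Hessian with $L_\Gbar$, invoke square-integrability of $\Gbar$ (from Assumption \ref{assum:u}, citing the same reference) for compactness, and conclude ill-posedness from the unboundedness of the pseudo-inverse. If anything, your justification of the final step is slightly more careful than the paper's, which asserts that compactness forces $\mathrm{Null}(L_\Gbar)\neq\{0\}$, whereas your observation that the nonzero eigenvalues of a compact self-adjoint operator accumulate at zero (so $L_\Gbar^{-1}$ is unbounded on its range even on $\mathrm{Null}(L_\Gbar)^\perp$) is the cleaner reason the problem is ill-posed.
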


  A regularization is necessary to obtain a stable solution for this ill-posed inverse problem of estimating $\nabla \Phi$. In particular, when $\mathrm{Null}(L_\Gbar)\neq \{0\}$, it is crucial to regularize only on $\mathrm{Null}(L_\Gbar)^\perp$ in order to prevent the estimator from being contaminated by components in $\mathrm{Null}(L_\Gbar)$. Data-adaptive RKHS regularization or priors, as proposed in \cite{LLA22,chada2024data}, employ the RKHS with reproducing kernel $\Gbar$ and yield convergent estimators.

  The ill-posedness in estimating $\nabla \Phi$ stems from the deconvolution structure of the problem. Consequently, even if additional properties are imposed on $\Phi$, such as radial or symmetry, the inverse problem remains ill-posed. However, when the data $u_l$ are contaminated by additive spatial noise, the operator $L_\Gbar$ in \eqref{eq:L_Gbar} can become strictly positive definite, which in turn yields a well-posed inverse problem; see Section~\ref{sec:num-Phi} for a numerical illustration.

\subsection{Joint estimation}\label{sec:est_all3}
Using the parameter spaces and operators in the previous sections, the joint estimation for $(h'',\nabla V,\nabla \Phi)$ takes place in the product space $L^2_{\rho_1}(\R^+)\otimes L^2_{\rho_2}(\R^d)\otimes L^2_{\rho_3}(\R^d)$. Meanwhile, the self-test loss function can be written as 
\begin{equation}\label{eq:stloss-jiont}
\begin{aligned}
    \calE(h'',\nabla V,\nabla \Phi)   
  : = &   \sum_{l=1}^L \int_{\R^d} \big[ u_l | \nabla [  \nu  h'(u_l) + \Phi * u_l + V)] |^2 \\
  &  - 2 f_l [ \nu h'(u_l) +\Phi*u_l + V ]  \big ] dx\ dt.%
\end{aligned} 
\end{equation}

The next proposition shows the ill-posedness of estimating  $(h'',\nabla V,\nabla \Phi)$.  
\begin{proposition}[Joint estimation]
  \label{prop:est-joint}
  Consider the problem of jointly estimating $h'',\nabla V,\nabla \Phi$ in Eq.\eqref{eq:W2-GF} from data $\{(u_l,f_l)\}_{l=1}^L$ satisfying Assumption {\rm \ref{assum:u}}. Let $\rho_1,\rho_2, \rho_3$ be the measures defined in \eqref{eq:rho-h},\eqref{eq:rho-V} and \eqref{eq:rho-Phi}, respectively. Then, the self-test loss function in \eqref{eq:stloss-jiont} is not uniformly convex, and its Hessian (second variation) has a zero eigenvalue with eigenfunction $\phi=(0,\mathbf{c},-\mathbf{c})$ for any nonzero $\mathbf{c}\in \R^d$. In particular,  the joint estimation problem of finding the minimizer of the loss function is ill-posed. 
\end{proposition}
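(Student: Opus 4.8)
The plan is to exploit that the self-test loss \eqref{eq:stloss-jiont} is quadratic, so by the argument of Proposition~\ref{prop:minimizer_self_test_loss} it equals, up to an additive constant, $B(\phi-\phi_*,\phi-\phi_*)$, where
\[
B(\phi,\psi):=\sum_{l=1}^L\innerp{R_\phi[u_l],v_\psi[u_l]}
\]
is a symmetric bilinear form (by the symmetry and linearity in \eqref{def:self-testFn}) and is nonnegative (by the positivity property, as the $u_l$ are nonnegative). Its Hessian is then the nonnegative self-adjoint operator $H$ on the product space $L^2_{\rho_1}(\R^+)\otimes L^2_{\rho_2}(\R^d)\otimes L^2_{\rho_3}(\R^d)$ determined by $\innerp{H\phi,\psi}=2B(\phi,\psi)$. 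To prove all three claims it therefore suffices to exhibit, for each nonzero $\mathbf{c}\in\R^d$, an element $\phi_0=(0,\mathbf{c},-\mathbf{c})$ of this space with $R_{\phi_0}[u_l]=0$ for every $l$: this forces $B(\phi_0,\cdot)\equiv 0$, hence $H\phi_0=0$, while at the same time $\calE(\phi_*+\phi_0)=B(\phi_0,\phi_0)=0=\calE(\phi_*)$.

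First I would verify that $\phi_0$ lies in the parameter space: integrating the densities in \eqref{eq:rho-V} and \eqref{eq:rho-Phi} shows $\rho_2$ and $\rho_3$ are finite measures, with total masses $\sum_l\|u_l\|_{L^1}$ and, up to the normalization in \eqref{eq:rho-Phi}, $\sum_l\|u_l\|_{L^1}^2$; hence the constant vector fields $\mathbf{c}$ and $-\mathbf{c}$ are square integrable against $\rho_2$ and $\rho_3$, and $h''\equiv 0\in L^2_{\rho_1}$ trivially. The key step is then the computation that, for $\phi_0=(0,\mathbf{c},-\mathbf{c})$, the self-testing function in \eqref{eq:stfn_MF} has gradient
\[
\nabla v_{\phi_0}[u_l]=\nu h''(u_l)\,\nabla u_l+(\nabla\Phi)*u_l+\nabla V=0+(-\mathbf{c})*u_l+\mathbf{c}=-\mathbf{c}\int_{\R^d}u_l\,dx+\mathbf{c}=0,
\]
using that each $u_l$ is a probability density of unit mass (as holds for solutions of \eqref{eq:mfe}). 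Consequently $R_{\phi_0}[u_l]=-\nabla\cdot\big(u_l\,\nabla v_{\phi_0}[u_l]\big)=0$ for every $l$. The same fact can also be read off \eqref{eq:stloss-jiont} by direct substitution, but there the mutual cancellation among the three quadratic terms and the cross terms is less transparent, so the operator identity $R_{\phi_0}[u_l]=0$ is the cleaner route.

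To finish, $R_{\phi_0}[u_l]\equiv 0$ gives $B(\phi_0,\psi)=\sum_l\innerp{R_{\phi_0}[u_l],v_\psi[u_l]}=0$ for every $\psi$ in the parameter space, hence $H\phi_0=0$; so the Hessian has the zero eigenvalue with the claimed $d$-parameter family of eigenfunctions $(0,\mathbf{c},-\mathbf{c})$, $\mathbf{c}\ne 0$. Since $B(\phi_0,\phi_0)=0$ with $\phi_0\ne 0$, no bound $B(\phi,\phi)\ge\lambda\|\phi\|^2$ with $\lambda>0$ is possible, so $\calE$ is not uniformly convex. Finally, because $\calE(\phi)=B(\phi-\phi_*,\phi-\phi_*)$ with $B\ge 0$, the minimizer set equals $\phi_*+\mathrm{Null}(B)$ and contains the $d$-dimensional affine subspace $\phi_*+\{(0,\mathbf{c},-\mathbf{c}):\mathbf{c}\in\R^d\}$, so the minimizer is not unique and the joint inverse problem is ill-posed. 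There is no substantive analytic obstacle here — the proof is a direct verification — and the step most likely to need care is the convolution identity $(-\mathbf{c})*u_l=-\mathbf{c}$, which hinges on the unit-mass normalization of the $u_l$ (otherwise the eigenfunction must be rescaled accordingly). I would not attempt to determine the whole null space of $H$, which also contains the null space of $L_\Gbar$ in the last component, since the statement only asserts the existence of a zero eigenvalue.
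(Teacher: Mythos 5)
Your proof is correct, and the key step differs from the paper's in a way worth noting. The paper assembles the full block Hessian $A$ in terms of the cross operators $M_{hV}, M_{V\Phi}, M_{\Phi V}, L_\Gbar$, and then verifies $\la A\phi_0,\phi_0\ra=0$ for $\phi_0=(0,\mathbf{c},-\mathbf{c})$ by computing $\innerp{M_{V\Phi}\mathbf{c},\mathbf{c}}_{L^2_{\rho_2}}=\innerp{M_{\Phi V}\mathbf{c},\mathbf{c}}_{L^2_{\rho_3}}=|\mathbf{c}|^2$ and $L_\Gbar\mathbf{c}=\mathbf{c}$; it then passes from the vanishing of the quadratic form to $A\phi_0=0$ via nonnegativity and self-adjointness. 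You instead work one level upstream: you show the forward operator itself annihilates the perturbation, $R_{\phi_0}[u_l]=-\nabla\cdot(u_l\nabla v_{\phi_0}[u_l])=0$, because $\nabla v_{\phi_0}[u_l]=(-\mathbf{c})*u_l+\mathbf{c}=0$ under the unit-mass normalization $\int u_l=1$. This gives $B(\phi_0,\psi)=0$ for \emph{all} $\psi$ directly, hence $H\phi_0=0$ without any Cauchy--Schwarz step, and it makes transparent the structural source of the degeneracy that the paper only records in the remark afterward (the gauge invariance $(\Phi,V)\mapsto(\Phi-\mathbf{c}\cdot x,V+\mathbf{c}\cdot x)$). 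Both proofs rely on the same implicit normalization $\int u_l\,dx=1$, which Assumption \ref{assum:u} does not state explicitly; you flag this, the paper does not. The only content of the paper's proof you omit is the final compactness argument: the paper additionally observes that $\innerp{A\phi_n,\phi_n}=\lambda_n\to0$ for $\phi_n=(0,0,\psi_n)$ with $\psi_n$ eigenfunctions of the compact operator $L_\Gbar$, which gives a second, independent reason for the failure of uniform convexity that persists even after the symmetry constraint on $\Phi$ removes the $(0,\mathbf{c},-\mathbf{c})$ directions. Since the proposition as stated only asserts non-uniform convexity and the existence of the zero eigenvalue with that particular eigenfunction, your argument covers every claim; the extra observation is not needed here but is what connects this proposition to the ill-posedness already established in Proposition \ref{prop:est-Phi}.
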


We remark that the singular value of the loss function's Hessian roots in the fact that different pairs $(\Phi, V)$ and $(\Phi + \mathbf{c} \cdot x, V - \mathbf{c} \cdot x)$ produce the same value of the loss function, which has been noticed in \cite{yao2022mean}. To eliminate this degeneracy, we enforce symmetry on $\Phi$, so that vectors of the form $(0,\mathbf{c},-\mathbf{c})^T$ no longer lie in the admissible function space for $\Phi$. In practice, this constraint can be implemented either by restricting to radial potentials (see Section~\ref{sec:num-Phi}) or by parametrizing $\Phi$ via $\Phi(x) = \widetilde{\Phi}(x) + \widetilde{\Phi}(-x)$, where $\widetilde{\Phi}$ is a learnable function (e.g., a neural network as in Section~\ref{sec:num-PhiV}).

\section{Applications to parametric and nonparametric estimations}\label{sec:numerics}
We demonstrate applications of the self-test loss function in estimating the function parameters in the weak form operator $R_{(h,\Phi,V)}[u] =- \nabla\cdot(u \nabla [\nu h'(u) + \Phi*u + V] )$ in \eqref{eq:W2-GF} and its gradient flow. We consider parameter estimation for $h$ in Section \ref{sec:num-h}, nonparametric estimation for radial $\Phi$ in Section \ref{sec:num-Phi}, and neural network regression for joint estimation of $\Phi$ and $V$ in Section \ref{sec:num-PhiV}. 


\subsection{Parametric estimation of the diffusion rate function}\label{sec:num-h}
Consider first a parametric estimation of $h$ in the equation 
\begin{equation}\label{eq:Ru_f_5_1}
    R_h[u]:=-\nabla\cdot(u [\nabla  h'(u)] =  - \nabla\cdot [ u h''(u) \nabla u]  =  f,
\end{equation}
from data $\{(u_l(x_i),f_l(x_i))\}_{i=1,l=1}^{N,L}$, where $x_i\in [0,1]$ is a uniform mesh and $u_l\in H^1_0((0,1))$. Here, the diffusion rate function $h$ is a power function in \eqref{eq:h-powerFn} with a parametric form
\begin{equation}\label{eq:h-para}
h_{\mathbf{c}}(s)  = 
 c_2  s^2 + c_3\frac{1}{2} s^3 + c_4 \frac{1}{3}s^4 = \sum_{k=1}^{n_c} c_k e_k(s), 
\end{equation}
where 
 $e_k(s) = \frac{1}{k-1}s^{k}$ for $k>1$, and $n_c=3$. 
Thus, the task is to estimate the parameters $\mathbf{c}=(c_2,c_3,c_4)$. Here we don't include the term $e_{1}(s)= s\log s$ because its second derivative $e_1''(s) =1/s$ is singular at $s=0$. Such a singularity leads to a singular function $e_1''(u_l(x))$ when $u_l(x)$ approaches zero at the boundaries, requiring additional numerical treatments when computing the loss function of $h''$ and the normal matrix for regression. 

\noindent\textbf{Synthetic Data generation.}
We generate data by adding noise to the values of analytically computed functions on the spatial mesh. Let the mesh points be  $x_i = \{ \frac{j}{N}, 1\leq j\leq N\}$. We obtain noisy data $\{u_l(x_i)\}$ by adding independent Gaussian noises $\mathcal{N}(0,\sigma^2/N)$ to the values of $u_l(x) =  \sin(\pi l x )$ on the mesh for $l\in \{1,2,3\}$. Note that these functions are in $H_0^1((0,1))$.  

The data $\{ f_l(x_i)\}$ are noisy observations of $R_{h_{\mathbf{c}^*}}[u_l](x)$ at the meshes: 
 $$f_l(x_i) =- R_{h_{\mathbf{c}^*}}[u_l](x_i) + \epsilon_{l,i} =- \sum_{k=2}^{n_c} c_k  \nabla\cdot [ u_l e_k''(u_l) \nabla u_l](x_i) + \epsilon_{l,i} $$ 
 with parameter $\mathbf{c}^*= (c_2,c_3,c_4) = (1,1.2,0.5)$ and $\{\epsilon_{l,i}\}$ being i.i.d. $\mathcal{N}(0,\sigma^2/N)$. Here we compute each $\nabla \cdot [u_l e_k''(u_l)\nabla u_l]$ analytically since $e_k$ and $u_l$ are polynomials and trigonometric functions.   

\noindent\textbf{Regression from the self-test loss function.} As studied in Section \ref{sec:est_h}, the self-test loss function in \eqref{eq:stloss-h''} with Riemann sum approximation is 
\begin{align*}
	\calE(\mathbf{c}) & =  
  \frac{1}{NL}\sum_{l=1}^L \sum_{i=1}^N \big[  h_\mathbf{c}''(u_l(x_i))^2 u_l(x_i) |\nabla u_i(x_i)|^2 - 2 f_l(x_i) h_\mathbf{c}'(u_l(x_i)) \big] \\ 
  & = \mathbf{c}^\top \mathbf{A} \mathbf{c} - 2\mathbf{c}^\top \mathbf{b}, 
\end{align*}
where the normal matrix $\mathbf{A}= (A_{k,m})$ and normal vector $\mathbf{b}$ defined by 
 \begin{align*}
  A_{k,m} &= \frac{1}{NL} \sum_{l=1}^L \sum_{i=1}^N  u_l(x_i) |\nabla u_l(x_i)|^2 e_k''(u_l(x_i)) e_m''(u_l(x_i)), \quad 1\leq k,m\leq n_c\\
  b_k &= \frac{1}{NL} \sum_{l=1}^L \sum_{i=1}^N f_l(x_i) e_k'(u_l(x_i)), \quad 1\leq k\leq n_c. 
  \end{align*}
The estimator is then solved by least squares regression,  
\[
\widehat h (s) = \sum_{k=1}^{n_c} \widehat c_k e_k(s), \quad \widehat{\mathbf{c}} =  \mathbf{A}^{-1} \mathbf{b}. 
\]

 We compare $\widehat h (s)$  (denoted by ``stLoss-estimator'') with an estimator using the strong-form equation (denoted by ``Strong-estimator''). The strong form estimator has coefficient $\widehat{\mathbf{c}}^s =  (\mathbf{A}^s)^{-1} \mathbf{b}^s$, where the normal matrix $\mathbf{A}^s$ and normal vector  $\mathbf{b}^s$ have entries 
 \begin{align*}
 A^s_{k,m} &=\frac{1}{NL} \sum_{l=1}^L \sum_{i=1}^N \nabla\cdot [ u_l e_k''(u_l) \nabla u_l](x_i)  \nabla\cdot [ u_l e_m''(u_l) \nabla u_l](x_i), \\
 b^s_k&= \frac{1}{NL} \sum_{l=1}^L \sum_{i=1}^N f_l(x_i)  \nabla\cdot [ u_l e_k''(u_l) \nabla u_l](x_i). 
 \end{align*} 
 Thus, the strong form estimator uses the second-order derivatives of $u$, while the weak form estimator uses only the first-order derivatives. 

 In the computation of both estimators, the derivatives are approximated by finite difference using the Savitzky-Golay filter with polynomial degree 3 and window size 11 (see, e.g.,\cite{schafer2011savitzky}). The difference between the two is that the Strong-estimator requires an additional finite difference approximation for the divergence term, whereas the stLoss-estimator uses the Riemann sum to approximate the integration. 

 \begin{figure}
\includegraphics[width=\textwidth]{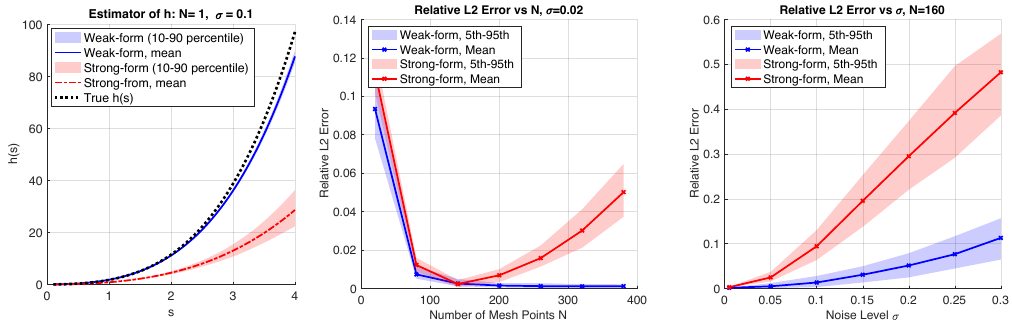}	
\caption{Estimators from self-test loss function (``stLoss'') vs estimators from strong-form equation (``Strong''). \textbf{Left:}  estimators in a typical set of 100 simulations with $N=400$ and $\sigma=0.1$. \textbf{Middle-Right:} Relative $L^2_{\rho_1}$ errors vs $N$ and $\sigma$.} \label{fig:est-h}\vspace{-3mm}
\end{figure}
      
Figure \ref{fig:est-h} shows that the stLoss-estimator significantly outperforms the Strong-estimator in typical simulations ({Left}) and in the convergence of relative $L^2_{\rho_1}$ error ($\|h_{*}''- \widehat h ''\|_{L^2_{\rho_1}}^2$) as mesh size $N$ increases or as the noise level $\sigma$ vanishes ({Middel-Right}). Note that as $N$ increases, the strong-estimator does not converge due to the noise being amplified by the additional finite difference approximation (recall that the noise decays at the rate $O(\frac{1}{\sqrt{N}})$ while the $\Delta x$ in finite difference has an order of $O(\frac{1}{N})$). 

Here, for each parameter set of $(N,\sigma)$, the percentiles are computed from 100 independent simulations with randomly generated noise; the empirical measure $\rho_1$ in \eqref{eq:rho-h} is computed from data by Riemann sum approximation of the integral and finite difference approximation of the derivatives. In the typical simulation (Left), the condition numbers of the normal matrix $\mathbf{A}$ are in the range $[30,40]$, indicating the well-posedness of the inverse problem.

     \bigskip 
In summary, the example shows that the estimator using our self-test loss function based on the weak-form equation can tolerate a rougher spatial mesh and larger-scale noise in the data than a strong-form-based estimator.

\bigskip 
{\color{black}
\noindent\textbf{Application to weak SINDy.} We apply our self-test loss function within the weak SINDy framework of \cite{messenger2021weak} to estimate the sparse parametric diffusion rate $h$ in \eqref{eq:h-para} from data. Specifically, we compare our self-testing functions with random Gaussian test functions $\{\psi_m\}_{m=1}^M$ designed following the strategy in \cite{messenger2021weak}, namely, tailoring them to the noise level and spectral properties of the data.  This design keeps the test set computationally feasible while avoiding the curse of dimensionality that affects more structured families. The Gaussian test functions have centers sampled uniformly in $[0,1]^d$ (with $d=2$) and bandwidths $\eta \in \{0.025, 0.1, 0.4\}$.

We assuming the true coefficient for $h_c(s)$ is given by 
$$ \bold{c} = (c_1, c_2, c_3, c_4, c_5) = (1, 0, 2, 0, 0).$$
The data is generated on the discrete mesh, and for $d\geq 1$, we consider the data to be the tensor product $u_l(x) = \sin(\pi l x_1) \cdots \sin (\pi l x_d)$ evaluated over a discrete mesh with $N = 100$ grids in each dimension, where $x = (x_1, \dots, x_d) \in \R^d$ and $1\leq l \leq L$ with $L=2$. The data $u_l$ and $f_l$ at each mesh point are polluted with additive Gaussian noise of variance $\sigma = 0.05$. Since the true parameter is known to be sparse, we use the modified sequential-thresholding least-squares (MSTLS) as introduced in \cite{messenger2021weak} to promote sparsity in the estimation. 

We report the estimation error as a function of $M$ over 10 independent simulations in Figure \ref{fig:SINDy_1}. In the self-test setting, we always use $L=2$ test functions, so the corresponding error is independent of $M$. As $M$ increases, the error obtained with random Gaussian test functions decreases. The best performance occurs when the Gaussian bandwidth $\eta=0.1$ is close to the noise level $\sigma=0.05$, in which case the errors approach those achieved by the self-testing functions. Even in this near-optimal regime, the self-testing functions still outperform the random Gaussian test functions. This example highlights the advantage of the proposed self-test framework.

Separately, additional experiments (not shown) suggest that self-testing functions yield an estimation error on the order of the noise level. In contrast, with sufficiently large $M$, random Gaussian test functions can produce errors below the noise level. This indicates that, with an appropriate choice of bandwidth $\eta$, the random Gaussian test functions may effectively filter out the noise. 
}

\begin{figure}[t]
\centering
\includegraphics[width=\textwidth]{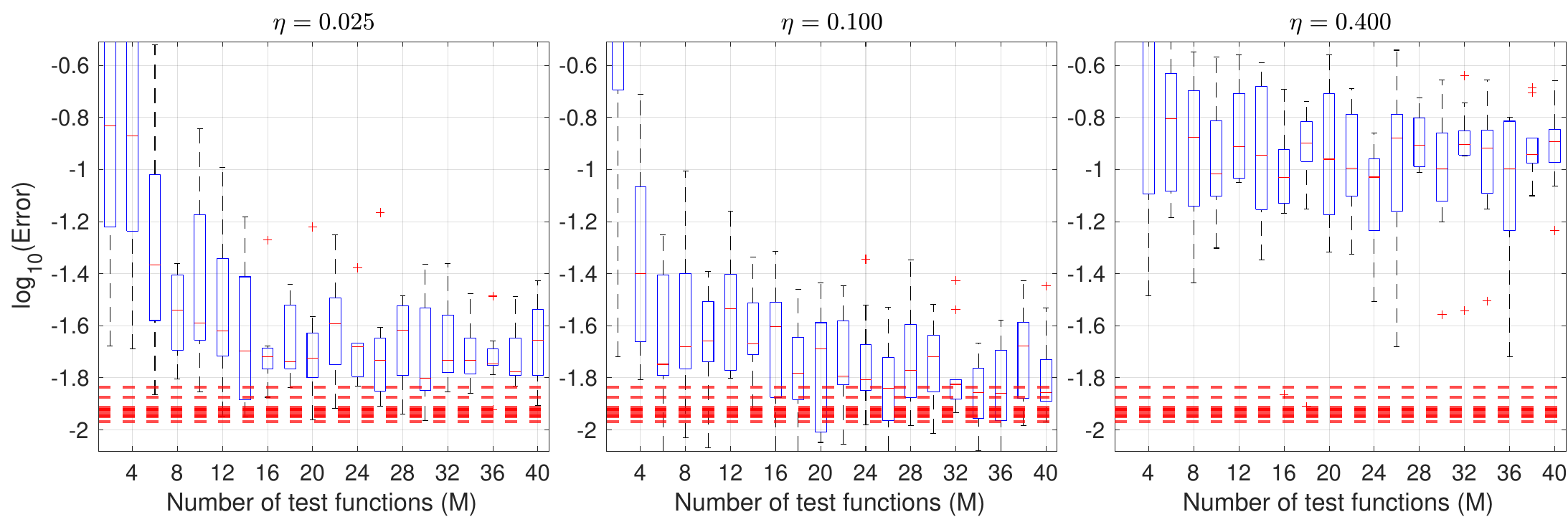}	
\caption{Comparison between self-testing functions and random Gaussian test functions in the weak SINDy framework. The boxplots show the distribution of estimation errors over 10 independent simulations using $M$ random Gaussian test functions, with bandwidths $\eta \in \{0.025, 0.1, 0.4\}$. The red dashed lines indicate the error obtained using $L=2$ self-testing functions, which do not depend on $M$. As $M$ increases, the error for Gaussian test functions decreases, and the best performance occurs when $\eta = 0.1$, close to the noise level $\sigma = 0.05$, at which point the errors approach those of the self-test formulation. Even in this near-optimal setting, self-testing functions yield consistently lower errors than the random Gaussian tests.}
\label{fig:SINDy_1}
\end{figure}

\subsection{Non-parametric estimation of interaction kernel}\label{sec:num-Phi}
Next, we consider estimating an interaction kernel, the derivative of a radial interaction potential, in the aggregation operator. We will compare strong-form and weak-form estimators with respect to their tolerance to observation noise.  
 
Specifically, consider the estimation of the function $\phi:[0,2]\to\R $, which is the derivative of the radial interaction potential $\Phi$ with $\nabla \Phi(x) =\phi(|x|)\frac{x}{|x|} $, in the aggregation operator 
\[R_\phi[u] = -\nabla  (u\nabla \Phi *u ) = f,  \]
from  data consisting of noisy input-output function pairs at discrete meshes in 1D:
\begin{equation}\label{eq:uf_disc_noisy}
	\{ (u_{k}^o(x_j):=u_{k}(x_j)+\epsilon_{kj}^u, f_{k}^o(x_j) = f_k(x_j)+\epsilon_{kj}^f )\}_{j=1,k=1}^{n_x,n_k},  
\end{equation}
where $ \epsilon_{kj}^u,\epsilon_{kj}^f  \sim \mathcal{N}(0,\sigma^2)$. 
Here, the spatial meshes $\{x_j\}$ are uniform on $\Omega = (0,10)$ satisyfing $x_{j}-x_{j-1} =\Delta x = 0.01$ for all $j$, and the noises $\{\epsilon_{kj}^u,\epsilon_{kj}^f \}_{k,j=1}^{n_k,n_x}$ are independent identically distributed Gaussian $\mathcal{N}(0,\sigma^2)$ random variables with standard deviation $\sigma$. The functions $\{u_k\}$ are $u_k(x) = \sin(\pi(x-(2k+1)))\mathbf{1}_{\{ |x-(2k+1)|<1.5\}}$ for $1\leq k\leq  n_k=3$. They are in $C_c^1(\Omega)$, so we can use integration by parts in the weak form and compute $f_k(x_j)$ using the strong form operator, i.e., we compute the analytical form of the integrand in the following integral,  
\begin{align*}
	f_k(x) &= - \int_0^2 \phi_{*}(|y|) \mathrm{sign}(y)\partial_x [u_k(x-y)u_k(x)]dy , 
\end{align*}
 where the integral is computed using the adaptive Gauss-Kronrod quadrature \cite{shampine2008vectorized}. In our tests, we set $\phi_{*}(r)= r^2\mathbf{1}_{[0,1]}(r)$. 
 Figure \ref{fig:est-Phi}(a) shows the data pairs.

The above equation is the mean-field equation \eqref{eq:FPE-V-Phi} with $V=0$ if $f= \pt_t u -\nu \Delta u$. In this case, the nonparametric estimation of $\phi$ has been studied in \cite{LangLu21id,LangLu22}. Here, we focus on the aggregation operator without the diffusion term.   

 In the following, we derive the least squares regression of $\phi$ using the self-test loss function. We first write the self-test loss function in the continuum, then approximate it by the discrete data and write the least squares estimator of $\phi$.

\noindent\textbf{The self-test loss function in continuum.} Using the self-testing function $v_\phi[u_k] =\Phi*u_k $ for each input-output pair $(u_k,f_k)$, and applying integration by parts, we obtain the self-test loss function  
\begin{align*}
  \calE_\mathcal{D}(\phi) = \sum_{k=1}^{n_k} \int_{\R}  u_k | \nabla \Phi * u_k|^2 dx  - 2  \int_{\R} f_k(x) \Phi*u_k(x) dx. 
\end{align*} 
Denote $F_{f,u}(r) := -\sum_{k=1}^{n_k} \int_0^{10} F_k(x)u_k(x)[u_k(x-r)- u_k(x+r)]\,dx$ with $F_k(x) :=\int_0^x f_k(y)dy$. 
We can write the loss function as (see Appendix \ref{append:num-phi-details} for a derivation) 
\begin{equation}\label{eq:loss_phi_radial}
\begin{aligned}
  \calE_\mathcal{D}(\phi) 
  = & \int\int \phi(r)\phi(s) \Gbar(r,s)\dot \rho(r)\dot \rho(s)drds  
  & - 2  \int_0^2 \phi(r) F_{f,u}(r)dr, 
\end{aligned} 
\end{equation}
where the density of the exploration measure $\dot\rho$ is defined as    
\begin{equation}\label{eq:rho_radial}
\dot\rho(r) := \frac{1}{Z}\sum_{k=1}^{n_k}\int_\R \sqrt{u_k(x)}|\delta u_k(x,r)| \,dx\,  \text{ with } \delta u_k(x,r) := u_k(x-r)- u_k(x+r)  
\end{equation}
with $Z$ being a normalizing constant. 
Here, the integral kernel $\Gbar$ is defined by 
\begin{equation}\label{eq:GGbar_radial}
\begin{aligned}
\Gbar(r,s) : = \frac{G(r,s)}{\dot\rho(r)\dot\rho(s)} \mathbf{1}_{\{\dot\rho(r)\dot\rho(s)>0\}}\, \text{ with }  G(r,s) : = \sum_{k=1}^{n_r}  \int_\R u_k(x)\delta u_k(x,r)  \delta u_k(x,s)dx  \,. 
\end{aligned}
\end{equation}
 These integrals are well-defined since $\{u_k(x)\}$ are uniformly bounded with compact support.

\begin{figure}
\includegraphics[width=0.95\textwidth]{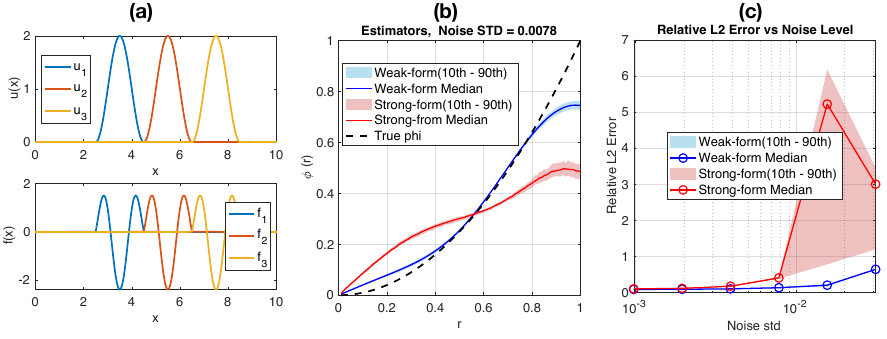}	
\caption{Estimators using weak (``Weak-form'') vs strong-form (``Strong-form'') equation. \textbf{(a)}:  Dataset $\{(u_k,f_k)\}_{k=1}^3$. 
\textbf{(b)}: Estimators (in percentiles) in a typical set of 100 simulations with noise level $\sigma = 0.0078$. \textbf{(c)} Relative $L^2$ errors v.s.~noise level $\sigma$.  Weak-form estimators are more robust to large noise than those based on the strong-form.  } \label{fig:est-Phi} \vspace{-3mm}
\end{figure}

\noindent\textbf{Least squares regression from empirical loss function.} Given the discrete data in \eqref{eq:uf_disc_noisy} on the mesh $\{x_j\}$, we can obtain a uniform mesh $\{r_l = l\Delta x \}_{l=1}^{n_r}$ on $[0,2]$ for the independent variable of $\phi$. Representing $\phi$ by a linear combination of piecewise constant functions $\phi(r) =\sum_{l=1}^{n_r} c_l\mathbf{1}_{[r_l,r_{l+1}]}(r)$,  
our task is to estimate the coefficient vector $\mathbf{c} = (c_1,\ldots,c_{n_r})^\top \in \R^{n_r\times 1}$. Approximating the loss function in \eqref{eq:loss_phi_radial} by Riemann sum using the noisy data and the above piecewise constant $\phi$, we obtain an empirical loss function that is quadratic in $\mathbf{c}$: 
 \begin{align*}
  \widehat{\calE_\mathcal{D} }(\mathbf{c}) = \mathbf{c}^\top \mathbf{A} \mathbf{c}- 2 \mathbf{c}^\top \mathbf{b} + C, 
\end{align*} 
where $\mathbf{A}\in \R^{n_r\times n_r}$ and $\mathbf{b}\in \R^{n_r\times 1}$ are the normal matrix and vectors and $C$ is a constant term independent of $\mathbf{c}$. The entries of $\mathbf{A}$ and $\mathbf{b}$ are 
\begin{align*}
  \mathbf{A}(l,l') & = \mathbf{G}(l,l') \approx \int\int \mathbf{1}_{[r_l,r_{l+1}]} (r)\mathbf{1}_{[r_{l'},r_{l'+1}]} (s) G(r,s)drds \\
  \mathbf{b}(l)    & =  -  \mathbf{g}^\top \widetilde{\mathbf{f} } \Delta r \Delta x 
                                \approx \int \mathbf{1}_{[r_l,r_{l+1}]} (r) F_{f,u}(r) dr,
\end{align*} 
where we denote $\mathbf{G} = \mathbf{g}^\top \mathbf{g} \Delta x (\Delta r)^2\in \R^{n_r\times n_r}$ with $\mathbf{g} = \big(\sqrt{|u_k^o(x_j)|} \delta u_k^o(x_j,r_l) \big) \in \R^{n_kn_x \times n_r} $ and $\widetilde{\mathbf{f} } = \big(\sum_{i=1}^jf_k^o(x_i) u_k^o(x_j) \Delta x \big)_{k,j=1}^{n_k,n_x}\in \R^{n_kn_x\times 1}$. 

The estimator is then solved with Tikhonov regularization: 
\[\widehat \phi(r) =\sum_{l=1}^{n_r} \widehat c_l\mathbf{1}_{[r_l,r_{l+1}]}(r), \quad (\widehat c_1,\ldots, \widehat c_{n_r})^\top :=  \widehat{\mathbf{c}}_{\lambda_*} = \big(\mathbf{A}+ \lambda_* \mathbf{I} \big)^{-1} \mathbf{b}
\]
with the hyperparameter $\lambda_*>0$ selected by the L-curve method \cite{hansen00_Lcurve}.  Due to the additive noise in $u_k^o$, the smallest eigenvalue of the normal matrix $\mathbf{A}$ is bounded below by a constant that scales with $\sigma^2$. Thus, the noise prevents $\mathbf{A}$ from being severely ill-conditioned, and the regularization mainly acts as a filter of the noise. Here we regularize using norm $\|\mathbf{c}\|_{\R^{n_r}}^2 = \mathbf{c}^\top \mathbf{I} \mathbf{c}$, and we leave it in future work to investigate other norms, such as the $L^2_\rho$-norm or the data-adaptive RKHS norm of $\phi$ in \cite{LLA22}.  

Also, we compute the exploration measure as $\brho = (\dot\rho(r_1),\ldots,\dot\rho(r_{n_r})) \in \R^{n_r\times 1}$ with $\dot\rho(r_l) = \frac{1}{Z}\sum_{k=1}^{n_k}\sum_{i=1}^{n_x} \sqrt{|u_k^o(x_i)|} |\delta u_k^o(x_i,r_l)| \Delta x$. The $L^2_\rho$ norm of $\phi$ is then given by $\|\phi\|_{L^2_\rho}^2 = \sum_{l=1}^{n_r} c_l^2\dot\rho(r_l)$. 

\noindent\textbf{Numerical results.} 
We compare the estimators using the weak-form and strong-form equations. The strong-form estimator uses the Savitzky-Golay filter to compute derivatives. We compute the estimators from data with noise levels $\sigma = \{2^{-j}, j=5,\ldots, 10\}$. We make 100 independent simulations for each noise level, each with randomly sampled noise.   

Figure \ref{fig:est-Phi} \textbf{(b)-(c)} reports the estimators and relative errors using the median, the 10th and 90th percentiles. In particular, \textbf{(b)} shows that the weak-form estimator is more accurate than the strong-form estimator when the noise level is $\sigma = 2^{-7}\approx 0.0078$.  \textbf{(c)} shows that when the noise level is small, the strong-form estimator is as accurate as the weak-form, indicating the effectiveness of the Savitzky-Golay filter. Still, when the noise level is high, the strong-form estimator has larger errors than the weak-form estimator, due to the need to approximate derivatives using finite differences. 

In summary, the weak-form estimator outperforms the strong-form estimator in terms of robustness to high levels of noise.

\subsection{Neural network regression for joint estimation}\label{sec:num-PhiV}
This section considers the joint estimation of the interaction potential $\Phi$ and the potential $V$ of the deterministic interacting particle system in Example \ref{exp:IPS} from sequential ensembles of unlabeled data. We use the self-test loss function in \eqref{eq:loss_MC_deterministic} for the weak form PDE of the empirical measures, as derived in Section \ref{sec:ensembleData}.

\noindent\textbf{Numerical settings.}
In our test, we set $M = 10$, $N = 30$, $d = 2$, and $t_l= l\Delta t$ with $\Delta t = 0.01$ and $L=20$. The particle system is solved using the fourth-order Runge-Kutta method. The true interaction and external force potentials are given by
\begin{equation}\label{eq} 
\Phi^*(x) = \cos(2x_1^2) + \cos(x_2), \quad V^*(x) = \exp\left(-\frac{3}{10}\left(\sin(2 x_1)^2 + \arctan(x_2)\right)\right). 
\end{equation}

In the data in \eqref{eq:data_Xtl}, the initial conditions $( X_{t_1}^{i, (m)}, 1\leq i\leq N)\in \R^{Nd}$ are randomly sampled, half of samples from the uniform distribution over $ [-2, 2]^{Nd} $ and the other half from a Gaussian mixture, so that the data spreads out in a region. Here $d=2$ and the Gaussian mixture is the product measure of the distribution $0.6 \times \mathcal{N}(\mu_1, \Sigma_1) + 0.4 \times \mathcal{N}(\mu_2, \Sigma_2)$ on $\R^2$, 
where $\mu_1$ are sampled from a uniform distribution on $[0, 2.5]^2$ and $\mu_2$ are sampled from a uniform distribution on $[-2.5, 0]^2$. The covariance matrices are fixed to be $\Sigma_1 = \begin{pmatrix}
    0.2 &0\\0 &0.4
\end{pmatrix}$ and $\Sigma_2 = \begin{pmatrix}
    1 &0.5\\0.5&1
\end{pmatrix}$.
In this setting, the distribution of the particles is concentrated in the first and third quadrants, as shown in  Figure \ref{Fig:joint_f}.

\noindent\textbf{Regression via neural network approximation.}
We use neural networks to approximate both the interaction and external force potentials. To approximate the interaction and external force potentials, we use two four-layer fully connected neural networks with sigmoid and \texttt{ReLU} activation functions. In particular, we enforce symmetry by setting $\Phi(x) = \tilde{\Phi}(x) + \tilde{\Phi}(-x)$, where $\tilde{\Phi}$ is the neural network approximation. This constraint resolves the identifiability issue in Proposition \ref{prop:est-joint} and in \cite{yao2022mean}, where different pairs $(\Phi, V)$ and $(\Phi + \mathbf{c} \cdot x, V - \mathbf{c} \cdot x)$ produce the same value of the loss function, since $\Phi + \mathbf{c}\cdot x$ is only symmetric when $\mathbf{c} = 0$ if $\Phi$ is symmetric. 

Optimization is performed using the Nesterov-accelerated Adaptive Moment Estimation \texttt{NAdam} method, which combines Adam's adaptive learning rates with Nesterov's lookahead mechanism to improve convergence and optimization efficiency \cite{dozat2016incorporating}, with a learning rate adjustment \texttt{ReduceLROnPlateau}, which reduces the learning rate when a monitored metric stops improving, helping to fine-tune optimization and avoid overfitting. 

The training process is presented in Figure \ref{fig:training_process}. The initial step size is set to be $\eta = 0.05$, and it is reduced to $0.1\eta$ whenever the loss stops reducing. The final minimized loss is -0.001309. Note that our self-test loss \eqref{def:self-testFn} is the quadratic \eqref{eq:st-Loss-square} minus a constant, where the constant is related to the true functions. The true constant in this example is 0.001377, which suggests that the quadratic loss has been minimized to $6.69\times 10^{-5}$.

\begin{figure}[t]
    \centering
    \begin{tabular}{cccc}
        \begin{subfigure}[b]{0.5\textwidth}
            \centering
            \includegraphics[width=\textwidth]{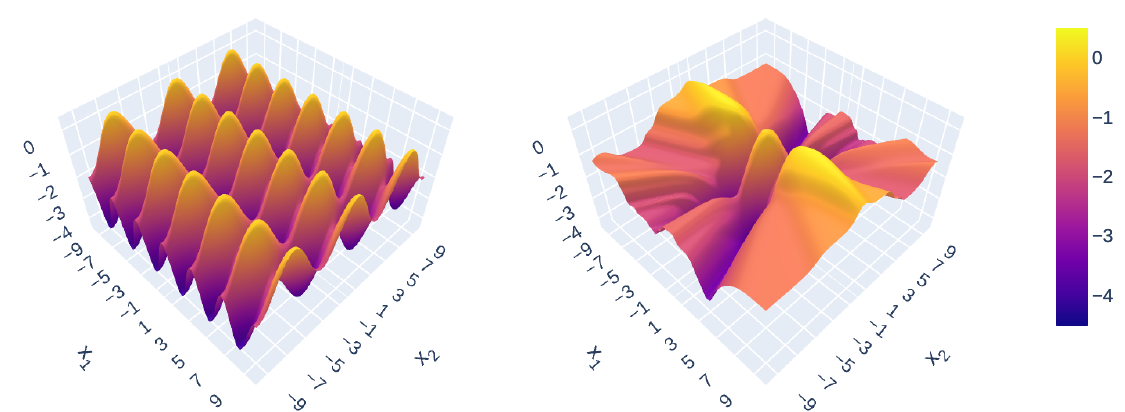}  
            \caption{True and estimated interaction potential $\Phi$}
            \label{Fig:joint_a}
        \end{subfigure} &
        \begin{subfigure}[b]{0.21\textwidth}
            \centering
            \includegraphics[width=\textwidth]{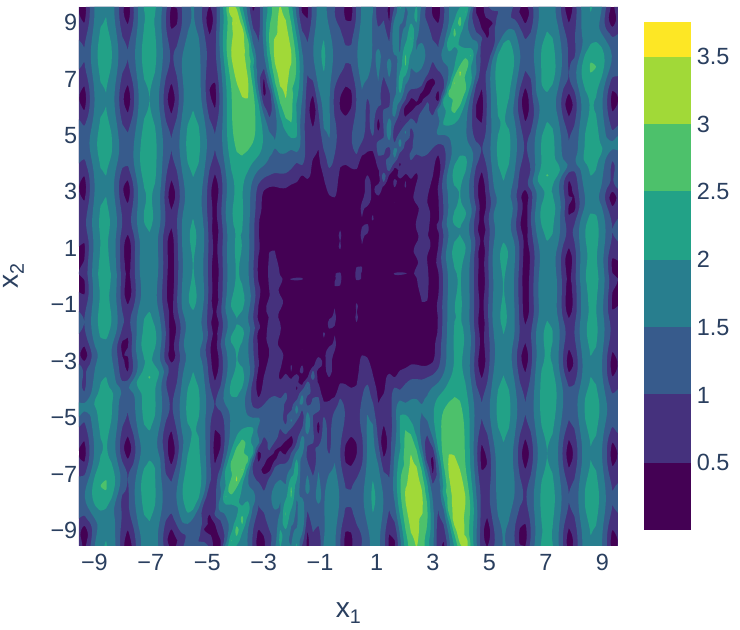}  
            \caption{$|\nabla \Phi - \nabla \widehat{\Phi}|$}
            \label{Fig:joint_b}
        \end{subfigure} &
        \begin{subfigure}[b]{0.21\textwidth}
            \centering
            \includegraphics[width=\textwidth]{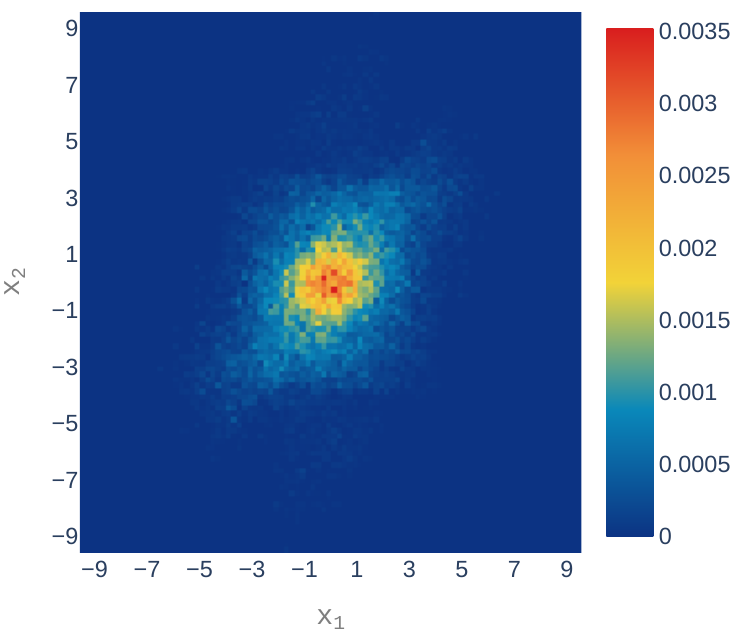}  
            \caption{Measure $\rho_3$}
            \label{Fig:joint_c}
        \end{subfigure} \\
        \begin{subfigure}[b]{0.5\textwidth}
            \centering
            \includegraphics[width=\textwidth]{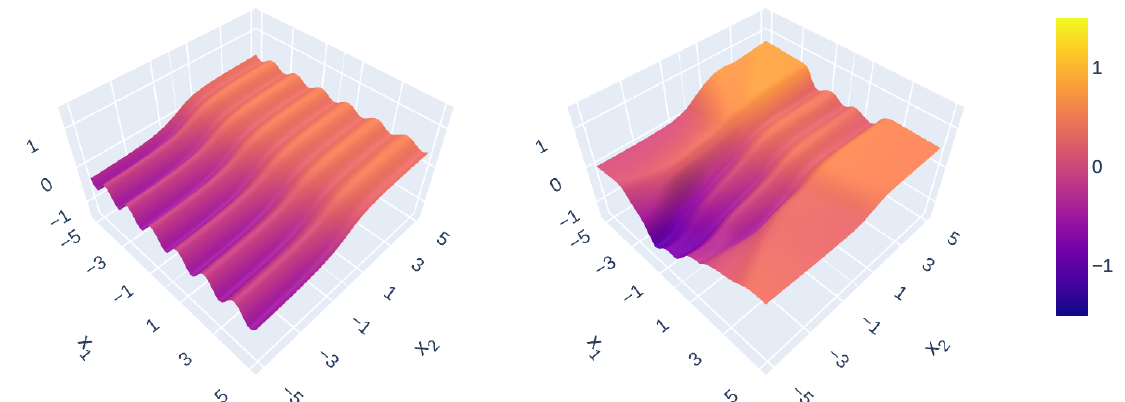}  
            \caption{True and estimated force potential $V$}
            \label{Fig:joint_d}
        \end{subfigure} &
        \begin{subfigure}[b]{0.21\textwidth}
            \centering
            \includegraphics[width=\textwidth]{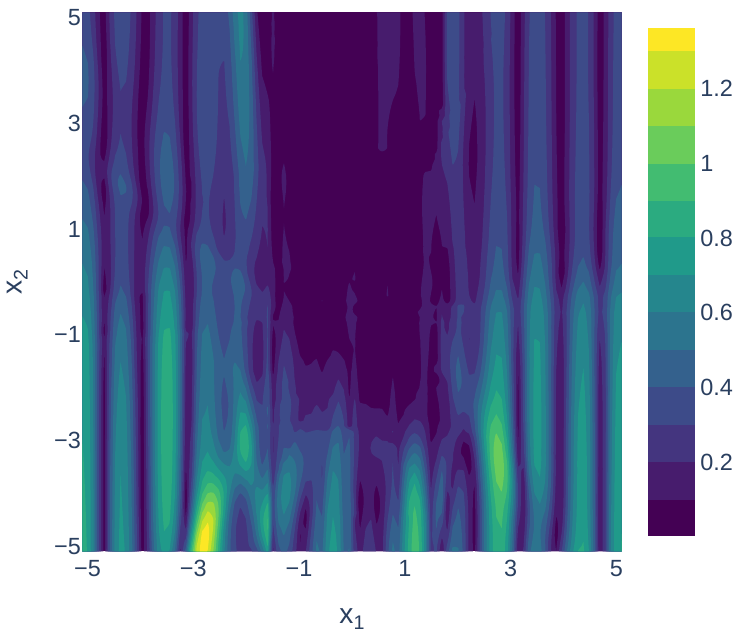}  
            \caption{$|\nabla V - \nabla \widehat{V}|$}
            \label{Fig:joint_e}
        \end{subfigure} &
        \begin{subfigure}[b]{0.21\textwidth}
            \centering
            \includegraphics[width=\textwidth]{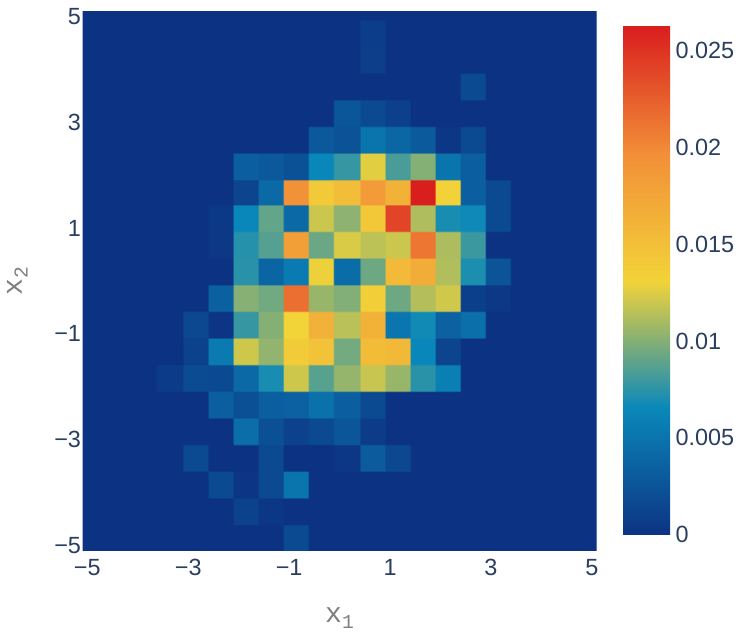}  
            \caption{Measure $\rho_2$}
            \label{Fig:joint_f}
        \end{subfigure}
    \end{tabular}\vspace{-3mm}
    \caption{Estimation result of the interaction and the force potential. The estimation results are accurate over the region where $\rho_2$ and $\rho_3$ are concentrated. }    \label{Fig:joint_all}\vspace{-3mm}
\end{figure}

Figure \ref{Fig:joint_all} presents the learned potentials. Figures \ref{Fig:joint_a} and \ref{Fig:joint_d} show the true and estimated interaction and force potentials, and the differences of their gradients are presented in Figures \ref{Fig:joint_b} and \ref{Fig:joint_e}. The estimators are accurate over the regions where data is concentrated, i.e., the large valued regions of the exploration measures, $\rho_2$ as in \eqref{eq:rho-V} for $V$ and $\rho_3$ as in \eqref{eq:rho-Phi} for $\Phi$, as shown in Figures \ref{Fig:joint_c} and \ref{Fig:joint_f}, respectively. These empirical measures are relatively rough since they are estimated from about $MNL = 4000$ and $MN^2L = 80000$ data samples for $\rho_2$ and $\rho_3$, respectively.  
 The final estimation error is $\| \nabla \Phi - \nabla \Phi^*\|_{L^2_{\rho_2}} = 0.5855$ and $\| \nabla V - \nabla V^*\|_{L^2_{\rho_2}} = 0.1746$.

To summarize, we overcome the challenge of unlabeled ensemble data without trajectory information by constructing a self-test loss function based on the weak-form equation of the empirical distributions. This self-test loss function is suitable for ensemble unlabeled data and neural network regression.

\begin{figure}[thb]
\centering
\includegraphics[width=0.7\textwidth]{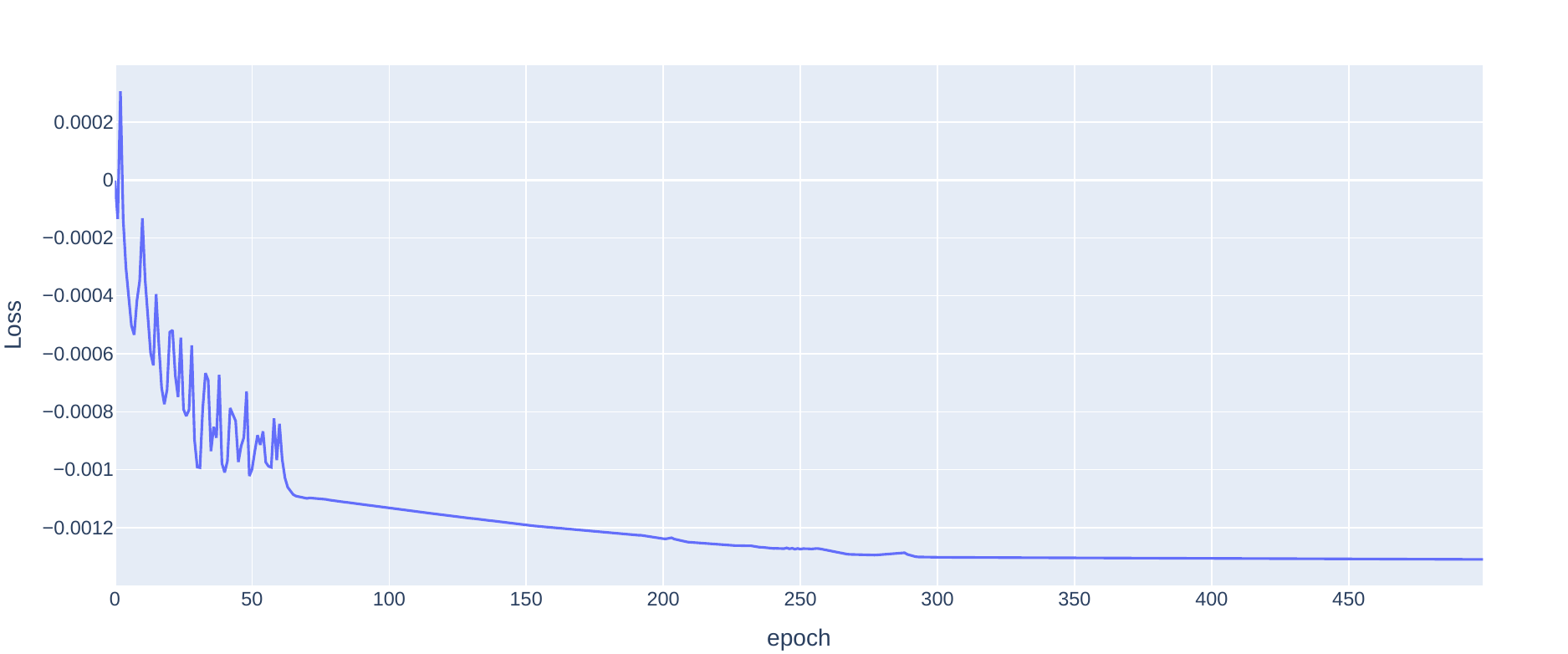}	
\caption{Training process. We utilized the \texttt{NAdam} optimizer and adjusted the learning rate when the loss plateaued. The initial oscillation is attributed to the reduction in the learning rate. Toward the end, the loss decreases more gradually because the learning rate is significantly lower than at the beginning. The final optimized loss value is -0.001309, corresponding to a normalized quadratic loss of $6.69\times 10^{-5}$.} 
\label{fig:training_process}\vspace{-3mm}
\end{figure}

\section{Conclusion}\label{sec:conclusion}

Discrete, noisy data pose substantial challenges for learning differential operators in PDEs and gradient flow systems. A standard approach is to construct loss functions based on weak-form equations, which avoids the large errors inherent in approximating high-order derivatives. However, this introduces the challenge of selecting suitable test functions.

This paper introduced a novel framework for constructing loss functions, called self-test loss functions. This method is designed for weak-form operators in PDEs and gradient flow systems. It applies to operators that depend linearly on the (function-valued) parameter to be estimated. By leveraging parameter—and data-dependent test functions, our approach automates the construction of loss functions and addresses the issue of test function selection.

The self-test loss function exhibits appealing theoretical and computational properties. It conserves energy in gradient flows and aligns with the expected log-likelihood ratio in stochastic differential equations. Furthermore, its quadratic structure enables a comprehensive analysis of the identifiability and well-posedness of the inverse problem. We demonstrate this by estimating the diffusion rate function, interaction potential, and kinetic potential in the aggregation-diffusion equation. Importantly, the self-test loss function supports the development of efficient parametric and nonparametric regression algorithms. Numerical experiments demonstrate that its minimizer is robust to noisy and discrete data, highlighting its practical utility and potential for broader applications.


 \appendix 

\section{Proofs and Derivations}\label{append:proofs}
\subsection{Proofs for Section \ref{sec:GF-energy}}\label{App:GF-liklihood}

\begin{proof}[Proof of Theorem \ref{thm:GF_EnergyConservation}] 
\textbf{Part (a)}.  Since $\phi_*$ is the true parameter, it satisfies the weak form of gradient flow $\pt_t u =- A_u \frac{\delta E_{\phi_*}}{\delta u}$. Applying a test function $\frac{\delta E_{\phi}}{\delta u}$ for any $\phi$ such that $E_\phi[u]<\infty$, we obtain
\[ 
\frac{dE_{\phi}[u]}{dt }= \innerp{\pt_t u, \frac{\delta E_{\phi}}{\delta u}} = -\innerp{A_u \frac{\delta E_{\phi_*}}{\delta u}, \frac{\delta E_{\phi}}{\delta u}}, \quad \forall t\in [0,T]. 
\] 
Integrating in time, we obtain
\[
E_\phi(u(T,\cdot)) - E_\phi(u(0,\cdot))= \int_0^T \frac{dE_{\phi}[u]}{dt }dt =  - \int_0^T \innerp{A_u \frac{\delta E_{\phi_*}}{\delta u}, \frac{\delta E_{\phi}}{\delta u}} \, dt. 
\]
Then, using the linearity of $\frac{\delta E_{\phi}}{\delta u}$ in $\phi$ due to Assumption \ref{assum:GF-abstract}, we write $\calE_{u_{[0,T]}}(\phi)$ in \eqref{eq:calE-cont-time} as  
  \begin{align}
   \calE_{u_{[0,T]}}(\phi) 
      & =-2 \int_0^T \innerp{A_u \frac{\delta E_{\phi_*}}{\delta u}, \frac{\delta E_{\phi}}{\delta u}} \, dt+  \int_0^T \la A_u \frac{\delta E_{\phi}}{\delta u}, \frac{\delta E_{\phi}}{\delta u} \ra dt  \notag \\
      & =   \int_0^T \la A_u \frac{\delta E_{\phi-\phi_*}}{\delta u}, \frac{\delta E_{\phi-\phi_*}}{\delta u} \ra dt  - \int_0^T \innerp{A_u \frac{\delta E_{\phi_*}}{\delta u}, \frac{\delta E_{\phi_*}}{\delta u}} \, dt. \label{eq:E-Au-square} 
  \end{align}
  From $\innerp{A_u\xi,\xi}\geq 0$ in \eqref{eq:Au-properties}, the first term is non-negative. Thus, 
       we know  that  
       $$\calE_{u_{[0,T]}}(\phi)\geq \calE_{u_{[0,T]}}(\phi_*)=-\int_0^T\innerp{A_u \frac{\delta E_{\phi_*}}{\delta u}, \frac{\delta E_{\phi_*}}{\delta u}} \ud t$$ and $\phi_*$   is a minimizer of $\calE_{u_{[0,T]}}(\phi)$.  

\textbf{Part (b)}. It follows from \eqref{eq:E-Au-square} that $\phi_*$ is the unique minimizer in $\calH$ if \eqref{eq:Au-positive} holds.

\textbf{Part (c)}.    
Since $\phi_0$ is a minimizer of $\calE_{u_{[0,T]}}(\phi)$ and $E_\phi$ is linear in $\phi$, we have, for any $\psi$,  
\begin{equation*}
\begin{aligned}
	0& = \frac{d}{d\epsilon } \calE_{u_{[0,T]}}(\phi_0+\epsilon \psi) = \lim_{\epsilon\to 0} \frac{\calE_{u_{[0,T]}}(\phi_0+\epsilon \psi)- \calE_{u_{[0,T]}}(\phi_0)}{\epsilon} \\
& = 2[E_\psi(u(T,\cdot)) - E_\psi(u(0,\cdot))]+ 2\int_0^T \innerp{A_u \frac{\delta E_{\phi_0}}{\delta u}, \frac{\delta E_{\psi}}{\delta u}} \, dt. 
\end{aligned}
\end{equation*}
Taking $\psi=\phi_0$, we obtain \eqref{eq:enerby-conservation}. 
\end{proof}

\vspace{3mm}

\begin{proof}[Proof of Theorem \ref{thm:stloss-likelihood}]

The Fokker-Planck equation of the Mckean-Vlasov SDE is \eqref{eq:FPE-V-Phi}. The self-test loss function for estimating $(V,\Phi)$ using its weak form is given in \eqref{eq:stLoss-mf}, which reads
\[
\calE_{u_{[0,T]}}(\Phi,V)   : =  \frac{1}{T}\int_0^T \int_{\R^d} \left[ u | \nabla \Phi * u + \nabla V |^2 - 2 (\partial_t u-   \nu  \Delta u) (\Phi*u + V )  \right ] dx\ dt. 
\]
On the other hand, by Girsanov Theorem (see e.g.,\cite{oksendal2013_sde}), the negative log-likelihood ratio for $\barX_{[0,T]}$ is 
\begin{equation*}
 \mathcal{E}_{\barX_{[0,T]}}(\phi)= - \ln \frac{d \P_\phi }{d\P_0} =  \frac{1}{2\nu}\int_{0}^T \left(\big| [\nabla \Phi *u +\nabla V](\barX_t)\big|^2 dt - 2 \big \langle [\nabla \Phi*u + \nabla V] (\barX_t), d\barX_t \big \rangle \right),   
\end{equation*}
where $\P_\phi$ and $\P_0$ are the distributions of the path under the SDE with parameters $\phi=(V,\Phi)$ and $V=\Phi=0$, respectively. 
Taking expectation and using the fact that $\barX_t \sim u(\cdot,t)$, 
\begin{align*}
\E \big[ \mathcal{E}_{\barX_{[0,T]}}(\phi) \big]  =  \frac{1}{2\nu}\int_0^T \int_{\R^d}  \big[ \big| \nabla \Phi *u +\nabla V \big|^2  u \,dx - 2\E\big[ \big \langle [\nabla \Phi*u + \nabla V] (\barX_t), d\barX_t \big \rangle \big ]  \ dt. 
\end{align*}
To compute the above expectation, using $d\barX_t$ from the SDE with the fact that the martingale term has expectation 0 and applying integration by parts, we have
\begin{align*}
& \, \E[ \big \langle [\nabla \Phi*u + \nabla V] (\barX_t), d\barX_t \big \rangle ] = \E[ \big \langle \nabla [ \Phi*u +  V] (\barX_t), -\nabla [V_{*}+ \Phi_{*} *u](\barX_t) ] \big \rangle ] \\
= & \, \int_{\R^d}  \big \langle  \nabla [\Phi*u +  V], -u \nabla [V_{*}+ \Phi_{*}*u ]\big \rangle dx  \\
= & \, \int_{\R^d}  (\Phi*u +  V)   \nabla\cdot \big[ u \nabla (V_{*}+ \Phi_{*}*u ) \big ] dx = \int_{\R^d}  (\Phi*u +  V)   (\partial_t u -\nu \Delta u) \big ] dx, 
\end{align*}
where the last equation follows from the Fokker-Planck equation \eqref{eq:FPE-V-Phi} with parameters $(V_{*},\Phi_{*})$. Combining the above two equations, we have $ \calE_{u_{[0,T]}}(\Phi,V)= \mathcal{E}_{\barX_{[0,T]}}(\phi)$. 
\end{proof}

\subsection{Proofs for Section \ref{sec:ID}}\label{appd:ID}

\begin{proof}[Proof of Proposition \ref{prop:est_h}]
Recall that in \eqref{eq:R-h''}, $\widetilde f = -\nabla \cdot \big[ u\nabla [\nu h_*'(u) + (\Phi_*-\Phi)*u + V_*-V] \big]=: - \nabla\cdot [ u \nabla F[u]]$, where we set $F[u]=\nu h_*'(u) + (\Phi_*-\Phi)*u + V_*-V $. Then, 
\begin{align*}
	\innerp{\widetilde{f},\, v_{h}[u]} & = \innerp{-\nabla \cdot [ u \nabla F[u]], h'(u)} 
= \int_{\R^d}  u(x) \nabla F[u](x)\cdot \nabla u(x) h''(u(x) )\,  dx  \\
& \leq \big( \int_{\R^d}  u(x) |\nabla F[u](x)|^2 | dx \big) ^{1/2} \big( \int_{\R^d}  u(x) |\nabla u(x)|^2 h''(u(x) )^2\,  dx \big) ^{1/2}<+\8. 
\end{align*} 
Thus, the Riesz representation theorem gives a data-dependent $h_{\mathcal{D}}\in L^2_{\rho_1}$ with $\rho_1$ defined in \eqref{eq:rho-h} such that  
\begin{align*}
	\sum_{l=1}^L  \innerp{\widetilde{f}_l,\, v_{h}[u_l]} & =\sum_{l=1}^L \innerp{-\nabla \cdot [ u_l \nabla F[u_l]], h'(u_l)} \\
& = \sum_{l=1}^L \int_{\R^d}  u_l \nabla F[u_l] \cdot \nabla u_l h''(u_l(x) )\,  dx  
 = :  \innerp{h_{\mathcal{D}}, h''}_{L^2_{\rho_1}}. 
\end{align*} 
Then, we can write the self-test loss function as
\begin{align*}
\calE_1(h'') & = \sum_{l=1}^L \innerp{R_h[u_l]-2\widetilde{f}_l,\, v_h[u_l]} + C_0 
	     = \|h''\|_{L^2_{\rho_1}}^2 - 2 \innerp{h'',h_{_{\mathcal{D}}}}_{L^2_{\rho_1}} + C_0. 
	\end{align*}
 The Fr\'echet derivative of $\calE_1$ in terms of the variable $h''$ is $D_{h''} \calE_1(h'') = 2h'' -2 h_{_{\mathcal{D}}}$. Thus, the minimizer of $\calE_1$ is unique and
\begin{equation*}
	\widehat{h''} = \argmin{h''\in L^2_{\rho_1}} \calE_1(h'') = I^{-1} h_{_{\mathcal{D}}}
\end{equation*} 
with $I$ being the identity operator on $L^2_{\rho_1}$. Thus, this inverse problem is well-posed. 
\end{proof} 

\vspace{3mm}

\begin{proof}[Proof of Proposition \ref{prop:est-V}]
First, recall that in \eqref{eq:R-V}, $\widetilde f = -\nabla \cdot \big[ u\nabla [\nu h_*'(u)-\nu h'(u) + (\Phi_*-\Phi)*u + V_*] \big]:= - \nabla\cdot [ u \nabla F[u]]$, where we set $F[u]=\nu h_*'(u) - \nu h'_*(u) + (\Phi_*-\Phi)*u + V_* $.
Thus, the linear term in the loss function is   
\begin{align*}
	\sum_{l=1}^L  \innerp{\widetilde{f}_l,\, v_{V}[u_l]} & =\sum_{l=1}^L \innerp{-\nabla \cdot [ u_l \nabla F[u_l]], V} 
  = \sum_{l=1}^L \innerp{ u_l \nabla F[u_l] \cdot \nabla V }   
 =:   \innerp{\overrightarrow{V_{\mathcal{D}}}, \nabla V}_{L^2_{\rho_2}}, 
\end{align*} 
where  $\overrightarrow{V_{\mathcal{D}}} \in L^2_{\rho_2}(\bR^d; \bR^d)$ with $\rho_2$ defined in \eqref{eq:rho-V} by the Riesz representation theorem. In particular, we have $\overrightarrow{V_{\mathcal{D}}} = \nabla F[u_l]$ when $F[u_l]$ is independent of $l$ (e.g., when $L=1$).  
Then, we can write the self-test loss function as
\begin{align*}
\calE_2(\nabla V) & = \sum_{l=1}^L \innerp{R_V[u_l]-2\widetilde{f}_l,\, V}  
        = \|\nabla V\|_{L^2_{\rho_2}}^2 - 2 \innerp{\nabla V,\overrightarrow{V_{\mathcal{D}}} }_{L^2_{\rho_2}} + C_0. 
	\end{align*}
 Regarding $\calE_2(\nabla V)$ as a functional of $\mathbf{v}=\nabla V\in L^2_{\rho_2}(\R^d; \R^d)$, we define
 $
 \calE_2(\mathbf{v}) = \|\mathbf{v}\|_{L^2_{\rho_2}}^2 - 2 \innerp{\mathbf{v},\overrightarrow{V_{\mathcal{D}}} }_{L^2_{\rho_2}} + C_0.$
The Fr\'echet derivative of $\calE_2$ 
over $L^2_{\rho_2}(\R^d; \R^d)$ is 
$D_{\mathbf{v}} \calE_2(\mathbf{v}) =2(\mathbf{v}-\overrightarrow{V_{\mathcal{D}}}).$
Thus, the minimizer of $\calE_2$, denoted as $\widehat{\nabla V}$, is unique and satisfies
\begin{align*}
\widehat{\nabla V} = \argmin{ \nabla V\in L^2_{\rho_2}(\R^d;\R^d) } \calE_2(\nabla V) = I^{-1} \overrightarrow{V_{_\mathcal{D}}}, 
\end{align*}
where $I$ is the identity operator in $L^2_{\rho_2}(\R^d;\R^d)$, and the inverse problem is well-posed. 

To identify $V$, we regard the self-loss function as a functional from $\calH_0$ to $\R$: 
\[ \widetilde\calE_2(V):= \calE_2(\nabla V)= \|\nabla V\|_{L^2_{\rho_2}}^2 - 2 \innerp{\sum_{l=1}^L \widetilde{f}_l,V}_{L^2} + C_0 . 
\] 
 Using Poincare inequality \eqref{eq:Poincare}, we have 
$   \int_{\R^d} |V|^2 \rho_2   \ud x  \leq c \int_{\R^d} |\nabla V|^2 \rho_2 \ud x  
$, 
where $c>0$ the Poincare constant.  
This implies 
$\|V\|_{\calH_0}^2:=\| V\|_{H^1_{\rho_2}}^2 \leq (1+c)\|\nabla V\|_{L^2_{\rho_2}}^2.$
Combining this with H\"older's inequality for
 \begin{align*}
  |\innerp{\sum_{l=1}^L \widetilde{f}_l,V}_{L^2_{\rho_2}}| =   |\innerp{\nabla V,\overrightarrow{V_{_\mathcal{D}}} }_{L^2_{\rho_2}}| \leq  \frac1 {4(1+c) } \|\nabla V\|_{L^2_{\rho_2}}^2 + 4(1+c) \| \vec{V}_{_\mathcal{D}}\|_{L^2_{\rho_2}}^2,
 \end{align*}
so we have  
\[ \widetilde\calE_2(V)  \geq \frac1 {2(1+c) } \|V\|_{H^1_{\rho_2}}^2 +C_0 -  8(1+c) \| \vec{V}_{_\mathcal{D}}\|_{L^2_{\rho_2}}^2. 
\] 
Hence, the functional $\widetilde\calE_2(V)$ is uniformly convex on $\calH_0$, and it has a unique minimizer in $\calH_0$. 

If $\widehat{V}$ minimizes $\widetilde\calE_2(V)$, the first variation (Gateaux derivative) of $\widetilde\calE_2(V)$ is
\begin{align*}
    \frac{\ud}{\ud \eps}\Big|_{\eps=0} \int |\nabla(\widehat{V} +\eps \tilde{V})|^2 \rho_2 - 2(\widehat{V} +\eps \tilde{V})\sum_{l=1}^L \widetilde{f}_l \,\ud x 
    =2 \int (\nabla \widehat{V} \nabla \tilde{V}\rho_2 - \sum_{l=1}^L \widetilde{f}_l \, \tilde{V} ) \ud x=0
\end{align*}
for any $\tilde{V}\in \calH_0$. Hence, the minimizer $\widehat{V}$ satisfies \eqref{eq:Vhat}.
\end{proof}

\vspace{3mm}

 \begin{proof}[Proof of Proposition \ref{prop:est-Phi}] 
 First, write the quadratic term in the loss function \eqref{eq:stloss-Phi} as 
\begin{align*}
& \sum_{l=1}^L  \int_{\R^d} u_l(x)|\nabla \Phi*u_l(x) |^2dx \\ 
=& \sum_{l=1}^L \int u_l(x)  \int \nabla \Phi(y) u_l(x-y) dy \cdot \int \nabla \Phi(y') u_l(x-y') dy'  dx \\
= &\int \int \innerp{ \nabla \Phi(y),\, \nabla \Phi(y')}_{\R^d}  \big[  \sum_{l=1}^L \int  u_l(x)   u_l(x-y) u_l(x-y')  dx \big] dy  dy' 
=  \innerp{\nabla \Phi,    L_\Gbar\nabla\Phi}_{L^2_{\rho_3}}, 
\end{align*}
with $L_\Gbar$ defined in \eqref{eq:L_Gbar} and $\rho_3$ defined in \eqref{eq:rho-Phi}.

Second, the Riesz representation theorem gives a vector-valued function $\overrightarrow\Phi_{_\mathcal{D}}:\R^d\to \R^d$ such that the linear term in the loss function can be written as 
\begin{align*}
	 \sum_{l=1}^L \int_{\R^d}  u_l(x)    \nabla F[u_l](x) \cdot \nabla \Phi*u_l(x) dx & 
	 = \innerp{\overrightarrow\Phi_{_\mathcal{D}}, \nabla \Phi}_{L^2_{\rho_3}}.  
\end{align*}

Then, we can write the loss function in \eqref{eq:stloss-Phi} as 
\begin{equation}
	\calE_3(\nabla\Phi)  = \innerp{\nabla \Phi,    L_\Gbar\nabla\Phi}_{L^2_{\rho_3}} - 2 \innerp{\overrightarrow\Phi_{_\mathcal{D}}, \nabla \Phi}_{L^2_{\rho_3}} + C_0. 
\end{equation}
Regarding $\calE_3$ as a functional in terms of $\nabla \Phi$, the Fr\'echet derivative of $\calE_3$ is $D_{\nabla\Phi} \calE_3(\nabla \Phi) = 2L_\Gbar \nabla\Phi -2 \overrightarrow\Phi_{_{\mathcal{D}}}$. Thus, the minimizer of $\calE_3$ is unique in $\mathrm{Null}(L_\Gbar)^\perp$ and
\begin{equation}\label{eq:Phi-hat-argmin}
	\widehat{\nabla \Phi} = \argmin{\nabla \Phi \in \mathrm{Null}(L_\Gbar)^\perp\subset L^2_{\rho_3}} \calE_3(\nabla \Phi) = L_\Gbar^{-1} \overrightarrow\Phi_{_{\mathcal{D}}}, 
\end{equation} 
where  $L_\Gbar^{-1}$ is the pseudo-inverse of the operator $L_\Gbar$. Since the operator $L_\Gbar$ is compact, so $\mathrm{Null}(L_\Gbar)\neq \{0\}$ and the above inverse problem is ill-posed.
\end{proof}

\vspace{3mm}

\begin{proof}[Proof of Proposition \ref{prop:est-joint}]
We solve for the estimators by setting the Fr\'echet derivatives of the loss function to zero. We write the loss function in \eqref{eq:stloss-jiont} as  
\begin{equation*}
\begin{aligned}
    \calE(h'',\nabla V,\nabla \Phi)   
  &   =  \|h''\|_{L^2_{\rho_1}}^2 + \|\nabla V\|_{L^2_{\rho_2}}^2+ \innerp{\nabla \Phi,    L_\Gbar\nabla\Phi}_{L^2_{\rho_3}}  \\ 
       + & 2\sum_{l=1}^L   \int_{\R^d} u_l \nu h''(u_l) \nabla u\cdot (\nabla V+ \nabla \Phi*u) dx 
         +   2\sum_{l=1}^L   \int_{\R^d} u_l \nabla V\cdot \nabla \Phi*u_l dx \\
       - &  2\sum_{l=1}^L   \int_{\R^d} u_l \nabla v_{\phi_*}[u_l] \cdot \nabla [ \nu h'(u_l) +\Phi*u_l + V ] dx. 
\end{aligned} 
\end{equation*}
Recall   $\rho_1,\rho_2, \rho_3$  defined in \eqref{eq:rho-h},\eqref{eq:rho-V} and \eqref{eq:rho-Phi}, respectively. We have that 
\begin{align*}
\innerp{ D_{h''} \calE(h'',\nabla V,\nabla \Phi), g_1}_{L^2_{\rho_1}}  & =  \innerp{2(h'' + M_{hV}\nabla V + M_{h\Phi}\nabla \Phi- h_{\mathcal{D}},g_1  }_{L^2_{\rho_1}}, \,
	\\
\innerp{ D_{\nabla V} \calE(h'',\nabla V,\nabla \Phi), \vec{g}_2}_{L^2_{\rho_2}}  & =  \innerp{2(M_{Vh} h'' + \nabla V + M_{V\Phi}\nabla \Phi- \overrightarrow{V_{\mathcal{D}}},\vec{g}_2 }_{L^2_{\rho_2}},  \,  
 \\
\innerp{ D_{\nabla \Phi} \calE(h'',\nabla V,\nabla \Phi), \vec{g}_3}_{L^2_{\rho_3}}  & =  \innerp{2(M_{\Phi h} h'' + M_{\Phi V} \nabla V + L_\Gbar \nabla \Phi- \overrightarrow{\Phi_{\mathcal{D}}},\vec{g}_3 }_{L^2_{\rho_2}},  \, 
\end{align*}
$\forall  g_1\in L^2_{\rho_1},\, \vec{g}_2 \in L^2_{\rho_2} $, and $\vec{g}_3 \in L^2_{\rho_3}$. Here, the operators $M_{ab}$ are defined from the cross-product terms in the loss function. For example, 
\begin{align*}
	\innerp{M_{hV} \nabla V,g_1}_{L^2_{\rho_1}} & = \sum_{l=1}^L   \int_{\R^d} u_l \nu g_1 \nabla u\cdot \nabla V dx; \\
  \innerp{M_{Vh} h'',\vec{g}_2}_{L^2_{\rho_2}} &= \sum_{l=1}^L   \int_{\R^d} u_l \nu h''\nabla u \cdot \vec{g}_2 ; \\
  \innerp{M_{ V \Phi} \nabla \Phi,\vec{g}_2}_{L^2_{\rho_2}}&=\sum_{l=1}^L   \int_{\R^d} u_l \vec{g}_2 \cdot \nabla \Phi*u_l dx;\\
	\innerp{M_{\Phi V} \nabla V, \vec{g}_3}_{L^2_{\rho_3}} & = \sum_{l=1}^L   \int_{\R^d} u_l \nabla V \cdot \vec{g}_3*u_l dx . 
\end{align*} 
In particular, since 
\begin{align*}
    &\innerp{M_{hV} \vec{b},g_1}_{L^2_{\rho_1}} = \innerp{M_{Vh} g_1, \vec{b}}_{L^2_{\rho_2}}, \quad \forall \vec{b}\in L^2_{\rho_2}, \, g_1\in L^2_{\rho_1},\\
    &\innerp{M_{\Phi V} \vec{g}_2, \vec{g}_3}_{L^2_{\rho_3}} = \innerp{M_{ V \Phi} \vec{g}_3,\vec{g}_2}_{L^2_{\rho_2}}, \quad \forall \vec{g}_2\in L^2_{\rho_2}, \, \vec{g}_3 \in L^2_{\rho_3},
\end{align*}
we have joint operators $M_{hV}^*= M_{Vh}$ and $M_{\Phi V}^*= M_{V \Phi}$ with operator norms satisfying $\|M_{hV}\|\leq 1 $ and $\|M_{\Phi V}\|\leq \|L_\Gbar\|^{1/2}$. 
Then, the joint estimator solves the system 
\begin{equation}\label{jointA}
	\begin{pmatrix}
		I_{L^2_{\rho_1}} & M_{hV} & M_{h\Phi}  \\
		M_{Vh}  & I_{L^2_{\rho_2}} &  M_{V\Phi}  \\
		M_{\Phi h}  &  M_{\Phi V} & L_\Gbar \\
	\end{pmatrix} 
	\begin{pmatrix}
		h''\\
		\nabla V \\
		\nabla \Phi 
	\end{pmatrix}
	 = 	\begin{pmatrix}
		h_{\mathcal{D}}\\
		\overrightarrow{V_{\mathcal{D}}} \\
		\overrightarrow{\Phi_{\mathcal{D}}}
	\end{pmatrix}. 
\end{equation}

The Hessian (the second variation) of the loss function is the operator on the left-hand-side of \eqref{jointA}, and denote it by $A: L^2_{\rho_1}(\R^+)\otimes L^2_{\rho_2}(\R^d)\otimes L^2_{\rho_3}(\R^d)\to L^2_{\rho_1}(\R^+)\otimes L^2_{\rho_2}(\R^d)\otimes L^2_{\rho_3}(\R^d)$. The operator $A$ is self-adjoint and semi-positive definite.  

We show first that $\phi=(0,\mathbf{c},-\mathbf{c})$ with a nonzero $\mathbf{c}\in \R^d$ is an eigenfunction of $A$ corresponding to the zero eigenvalue. 
Note that by definition, $\innerp{M_{V\Phi} \mathbf{c},\mathbf{c}}_{L^2_{\rho_2}}=\sum_{l=1}^L   \int_{\R^d} u_l  \mathbf{c} \cdot  \mathbf{c}*u_l dx = \|\mathbf{c}\|^2 $ and similarly, $\innerp{M_{\Phi V} \mathbf{c},\mathbf{c}}_{L^2_{\rho_3}} =\|\mathbf{c}\|^2 $. Meanwhile, we have $ L_\Gbar \mathbf{c} = \int \int  \frac{G(y,y')}{\dot\rho_3(y)} \mathbf{c} d y' \, dy =  \mathbf{c}$ by the definition of $G$.   
It follows that 
\begin{align*}   
\la A\phi, \phi\ra_{L^2_{\rho_1}\otimes L^2_{\rho_2}\otimes L^2_{\rho_2}} =& \la \begin{pmatrix}
		M_{hV} \mathbf{c} - M_{h\Phi} \mathbf{c}\\
		I_{L^2_{\rho_2}} \mathbf{c}-  M_{V\Phi} \mathbf{c} \\
		M_{\Phi V} \mathbf{c} - L_\Gbar \mathbf{c} 
	\end{pmatrix}, \begin{pmatrix}
		0\\
		\mathbf{c} \\
		-\mathbf{c}  
	\end{pmatrix} \ra_{L^2_{\rho_1}\otimes L^2_{\rho_2}\otimes L^2_{\rho_2}}\\
    =& \la I_{L^2_{\rho_2}} \mathbf{c} -  M_{V\Phi} \mathbf{c}, \mathbf{c} \ra_{L^2_{\rho_2}} -  \la M_{\Phi V} \mathbf{c} - L_\Gbar \mathbf{c}, \mathbf{c}\ra_{L^2_{\rho_3}}=0. 
    \end{align*} 

Lastly, note that for $\phi_n=(0,0,\psi_n  )$, where $\psi_n$ is an eigenfunction of $L_\Gbar$ such that $L_\Gbar\psi_n= \lambda_n\psi_n$, we have $\innerp{A\phi_n,\phi_n} = \lambda_n $, where $\lambda_n \to 0$ as $n\to\infty$ since $L_\Gbar$ is compact. Thus, the loss function is not uniformly convex, and the joint estimation is ill-posed.
\end{proof}

\subsection{Derivation details for Section \ref{sec:num-Phi}}\label{append:num-phi-details}
\begin{proof}[Derivation of Eq.\eqref{eq:loss_phi_radial}]
Using the facts that $\nabla \Phi(|x|) = \phi(|x|)\frac{x}{|x|} $ and 
$$\nabla \Phi * u (x)= \int_\R \phi(|y|)\frac{y}{|y|} u(x-y)dy = \int_0^\infty \phi(r)[ u(x-r)- u(x+r)] dr, 
 $$
 along with the notation $\delta u(x,r;t)$ in \eqref{eq:rho_radial}, we can write the integrals as 
\begin{align*}
 \int_{\R}  u | \nabla \Phi * u|^2 dx dt 
= &  \int_{0}^\infty  \int_{0}^\infty \phi(r)\phi(s) \int_\R u(x)\delta u(x,r)  \delta u(x,s)dx drds dt, \\
= &  \int_{0}^\infty  \int_{0}^\infty \phi(r)\phi(s) G(r,s) drds = \int_{0}^\infty  \int_{0}^\infty \phi(r)\phi(s) \Gbar(r,s) \dot\rho(r)\dot\rho(s)drds. 
\end{align*}
where the integral kernels $G, \Gbar:\R_+\times \R_+\to \R$ are defined in \eqref{eq:GGbar_radial}. 

Denote $F(x) = \int_{0}^x f(y)dy$. Integration by parts with $\Phi* u(10)= \Phi* u(0) =0$ implies that
\begin{align*}
	 \int_{\R} f(x)  \Phi*u(x) dx & =   F(x) \Phi* u(x)\big | _0^{10} -  \int_0^2  \phi(r) \int_0^{10} F(x)[u(x-r) - u(x+r)]\,dx\,dr \\ 	 & =    \int_0^2  \phi(r) F_{f,u}(r)\,dr,  
\end{align*} 
where $F_{f,u}(r) := - \int_0^{10} F(x)[u(x-r) - u(x+r)]\,dx$. 
 Combining the above equations, we obtain \eqref{eq:loss_phi_radial}. 
\end{proof}

\ifarXiv
\section*{Acknowledgment}
Yuan Gao was partially supported by NSF under Award DMS-2204288. Fei Lu was partially supported by NSF DMS-2238486 and DMS-2511283. 
\fi

\ifjournal
\section*{Declarations}
The authors declare that they have no conflict of interest.

\section*{Data Availability Statement} The data used in this study are synthetic and were generated through simulations. Detailed descriptions of the simulation methodologies and parameters are provided in the manuscript to ensure reproducibility.  All data sets and codes will be made available upon request.
\fi

{\small 
\bibliographystyle{plain} 
\bibliography{ref_FeiLU2024_11,ref_gao,ref_IPS_learning2404,ref_weakPDE_learning2411,ref_quanjun}
}
\end{document}